\newtheorem{theorem}{Theorem}
\newtheorem{lemma}{Lemma}
\newtheorem{corollary}{Corollary}
\renewcommand{\[}{\begin{eqnarray}}
\renewcommand{\]}{\end{eqnarray}}
\newcommand{\E}{\mathbb{E}}
\renewcommand{\P}{\mathbb{P}}
\icmltitlerunning{Learning Randomly Perturbed Structured Predictors for Direct Loss Minimization}
\begin{document}

\twocolumn[
\icmltitle{Learning Randomly Perturbed Structured Predictors \\ for Direct Loss Minimization}



\icmlsetsymbol{equal}{*}

\begin{icmlauthorlist}
\icmlauthor{Hedda Cohen Indelman}{Technion}
\icmlauthor{Tamir Hazan}{Technion}

\end{icmlauthorlist}

\icmlaffiliation{Technion}{Technion}

\icmlcorrespondingauthor{Hedda Cohen Indelman}{cohen.hedda@campus.technion.ac.il}

\icmlkeywords{Machine Learning, ICML} \vskip 0.3in]


\printAffiliationsAndNotice{} 

\begin{abstract}
  Direct loss minimization is a popular approach for learning predictors over structured label spaces. This approach is computationally appealing as it replaces integration with optimization and allows to propagate gradients in a deep net using loss-perturbed prediction. Recently, this technique was extended to generative models, by introducing a randomized predictor that samples a structure from a randomly perturbed score function. In this work, we interpolate between these techniques by learning the variance of randomized structured predictors as well as their mean, in order to balance between the learned score function and the randomized noise. We demonstrate empirically the effectiveness of learning this balance in structured discrete spaces.
\end{abstract}


\section{Introduction}
Learning and inference in high-dimensional structured models drives much of the research in machine learning applications, from computer vision, natural language processing, to computational chemistry. Examples include scene understanding \citep{kendall2015bayesian} machine translation \citep{wiseman2016sequence} and molecular synthesis \citep{jin2020hierarchical}. The learning process optimizes a score for each of the exponentially many structures in order to best fit the mapping between input and output in the training data. While it is often computationally infeasible to evaluate the loss of all exponentially many structures simultaneously through sampling, it is often feasible to predict the highest scoring structure efficiently in many structured settings. 

Direct loss minimization is an appealing approach in discriminative learning that allows to learn a structured model by predicting the highest scoring structure \citep{NIPS2010_4069, Keshet2011DirectER, Song2016TrainingDN}. It allows to improve the loss of the structured predictor by considering the gradients of two predicted structures: over the original loss function and over a perturbed loss function. This approach implicitly uses the data distribution to smooth the loss function between a training structure and a predicted structure, thus propagating gradients through the maximal argument of the predicted structure. Unfortunately, our access to the data distribution is limited and we cannot reliably represent the intricate relation between a training instance and its exponentially many structures. Recently, this framework was extended to generative learning, where a random perturbation that follows the Gumbel distribution law allows to sample from all possible structures \citep{Lorberbom2018DirectOT}. However, one cannot apply this generative learning approach effectively to discriminative learning, since the random noise that is added in the generation process interferes in predicting the best scoring structure.

In this work we combine these two approaches: we explicitly add random perturbation to each of the structures, in order to reliably represent the intricate relation between the a training instance and its exponentially many structures. To balance between the learned score function and the added random perturbation, we treat the score function as the mean of the random perturbation, and learn its variance. This way we are able to control the ratio between the signal (the score) and the noise (the random perturbation) in discriminative learning. 

In summary, we make the following contributions:
\begin{enumerate}[nosep]
    \item We show that the uniqueness assumption of the predicted structure is a key element in the gradient step of direct loss minimization, thus mathematically defining its general position assumption. 
    \item We prove that random perturbation ensures unique maximizers with probability one.
    \item We identify that  random perturbation might also serve as noise that masks the score function signal. Hence, we introduce a method for learning both the mean and the variance of randomized predictors in the high-dimensional structured label setting.
    \item We show empirically the benefit of our approach in two structured prediction problems.
\end{enumerate}


\section{Related Work}
\label{sec:related} 
Effective structured learning and inference over discrete, combinatorial models is challenging and has been addressed by different approaches.

Direct loss minimization is an effective approach in discriminative learning that was devised to optimize non-convex and non-smooth loss functions for linear structured predictors \citep{NIPS2010_4069}. Later it was extended to non-linear models, including hidden Markov models and deep learners \citep{Keshet2011DirectER, Song2016TrainingDN}.  Our work extends direct loss minimization by adding random noise to its structured predictor and learning its variance. Recently, the idea of optimization that replaces sampling was extended to generative learning and reinforcement learning \citep{Lorberbom2018DirectOT, lorberbom2019direct}. Similar to our work, these works also add random Gumbel perturbation and learn the mean of their structured predictor. In contrast, our work also learns the variance of the predictor, and our experimental validation shows it contributes to the performance of the predictor. Also, our theoretical contribution sets the framework to handle any structured predictor. 
Closely related is a method of differentiating through marginal inference \citep{NIPS2010_4107}, which shows that the gradient of the loss with respect to the parameters can be computed based on inference over the original parameters , and one over the parameters pertubed in the direction of the loss derivative w.r.t. to the marginals.

Another line of work considers continuous relaxations of the discrete structures.
\citet{Paulus2020GradientEW} have suggested a unified framework for constructing structured relaxations of combinatorial distributions, and have demonstrated it as a generalization of the Gumbel-Softmax trick. Their method builds upon differentiating through a convex program and induces solutions found in the interior of the polytope rather than on its faces, as a function of temperature-controlled approximation. An efficient extension for sorting and ranking differential operators has been suggested lately \citep{Blondel2020FastDS}.
SparseMAP \citep{Niculae2018SparseMAPDS} is a sparse structured inference framework which offers a continuous relaxation. It finds sparse MAP solutions on the faces of the marginal polytope.
Recently, \citet{berthet2020learning} suggested stochastic smoothing to allow differentiation
through perturbed maximizers.
In contrast, we do not use convex smoothing techniques of the structured label for differentiation.

Blackbox optimization \citep{DBLP:conf/iclr/PogancicPMMR20} is a new scheme to differentiate through argmax, which allows backward pass through blackbox implementations of combinatorial solvers with linear
objective functions. 

Our work considers two popular structured prediction problems: bipartite matching and $k$-nearest neighbors. Learning matchings in bipartite graphs has been extensively researched. When the bipartite graph is balanced, a matching can be represented by a permutation, which is an extreme point of the Birkhoff polytope, i.e., the set of all doubly stochastic matrices. Many works have built upon Sinkhorn normalization, an algorithm that maps a square matrix to a doubly-stochastic matrix. The Sinkhorn normalization has been incorporated in end-to-end learning algorithms in order to obtain relaxed gradients for learning to rank \citep{Adams2011RankingVS}, bipartite matching \citep{mena2018learning}, visual permutation learning \citep{8481554}, and latent permutation inference \citep{pmlr-v84-linderman18a}. This continuous relaxation is inspired by the Gumbel-Softmax trick \citep{Jang2016CategoricalRW, DBLP:conf/iclr/MaddisonMT17}. Andriyash et al. \citet{Andriyash2018ImprovedGO} have later showed that the Gumbel-Softmax estimator is biased and proposed a method to reduce its bias. We also consider the problem of stochastic
maximization over the set of possible latent permutations. However, we do not relax the use of bipartite matchings. Instead, we directly optimize the bipartite matching predictor and propagate gradients using the direct optimization approach.

Our work also considers learning $k$-nearest neighbors, i.e., learning an embedding of points that encourages the $k$ closest points to the test point to have the correct label. The body of work on sorting and specifically top-$k$ operators in an end-to-end learning framework is extensive.  \citet{grover2018stochastic} have suggested a continuous relaxation of the output of the sorting operator from permutation matrices to the set of unimodal row-stochastic matrices, where every row sums to one and has a distinct maximal argument. \citet{NIPS2018_7386} developed a continuous deterministic relaxation that maintains differentiability with respect to pairwise distances, but retains the original $k$-nearest neighbors as the limit of a temperature parameter approaching zero.
Other approaches are based on top-$k$ subset sampling \citep{DBLP:conf/ijcai/XieE19, conf/icml/KoolHW19}. \citet{berrada2018smooth} have introduce a family of smoothed, temperature controlled loss functions that are suited to top-k optimization. In contrast, our work does not relax the objective but rather directly optimize the top-$k$ neighbors. \citet{Xie2020DifferentiableTO} have proposed a smoothed approximation to the top-$k$ operator as the solution of an Entropic Optimal Transport problem.


\section{Background}
\label{sec:bg}

Learning to predict structured labels $y \in Y$ of data instances $x \in X$ covers a wide range of problems. The structure is incorporated into the label $y = (y_1,...,y_n)$ which may refer to matchings, permutations, sequences, or other high-dimensional objects. For any data instance $x$, its different structures are scored by a parametrized function $\mu_w(x,y)$. Discriminative learning aims to find a mapping from training data $S = \{(x_1,y_1),...,(x_m,y_m)\}$ to parameters $w$ for which $\mu_w(x,y)$ assign high scores to structures $y$ that describe well the data instance $x$. 
The parameters $w$ are fitted to minimize the loss $\ell(\cdot , \cdot)$ of the instance-label pairs $(x,y) \in S$ between the label $y$ and the highest scoring structure of $\mu_w(x,y)$. While gradient methods are the most popular methods to learn the parameters $w$, they are notoriously inefficient for learning discrete predictions. When considering discrete labels, the maximal argument of $\mu_w(x,y)$ is a piecewise constant function of $w$, and its gradient with respect to $w$ is zero for almost any $w$. Consequently, various smoothing techniques were proposed to propagate gradients while learning to fit discrete structures.

Direct loss minimization approach aims at minimizing the expected loss $\min_w \E_{(x,y) \sim {\cal D}} \ell(y^*_w, y)$ that incurs when the training label $y$ is different than the predicted label \citep{NIPS2010_4069,Keshet2011DirectER,Song2016TrainingDN}
\[
y^*_w  \triangleq \arg \max_{\hat y} \mu_w(x,\hat y) \label{eq:yopt} 
\]
Direct loss minimization relies on a loss-perturbed prediction 
\[
y^*_w(\epsilon) \triangleq \arg \max_{\hat y} \{ \mu_w(x,\hat y) + \epsilon \ell(y, \hat y)\}. \label{eq:ydirect} 
\]
It introduces an optimization-based gradient step for the expected loss, namely $\nabla \E_{(x,y) \sim {\cal D}} \ell(y^*_w, y) =$
\[
\lim_{\epsilon \rightarrow 0} \frac{1}{\epsilon} \Big( \E_{(x,y) \sim {\cal D}} [\nabla_w \mu_w(x, y^*_w(\epsilon)) - \nabla_w \mu_w(x, y^*_w)]\Big). \label{eq:direct} 
\]
Unfortunately, the above gradient step does not hold for any $w$, cf. \cite{NIPS2010_4069} Section 3.1. For example, when $w = 0$ the gradient estimator in Equation (\ref{eq:direct}) may be zero for any $(x,y) \sim {\cal D}$ regardless of the value of $\nabla \E_{(x,y) \sim {\cal D}} [\ell(y^*_w, y)]$. In Section \ref{sec:direct} we define the mathematical condition for which Equation (\ref{eq:direct}) represents the gradient. 

Recently, the direct loss minimization technique was applied to generative learning. In this setting, a random perturbation $\gamma(y)$ is added to each configuration, \citep{Lorberbom2018DirectOT}. The technique allows to randomly generate structures $y$ for any given $x$ from a generative distribution $q(y|x) \propto e^{\mu_w (x,y)}$. The generative learning approach relies on the connection between $q(y | x)$ and the Gumbel-max trick, namely $\P_{\gamma \sim g}[y^*_{w,\gamma} = y] \propto e^{\mu_w(x,y)}$, when $y^*_{w, \gamma} = \arg \max_{\hat y}  \{ \mu_w (x,\hat y) + \gamma(\hat y) \}$ and $\gamma(y)$ are i.i.d. random variables that follow the zero mean Gumbel distribution law, which we denote by ${\cal G}$. The corresponding gradient step, in discriminative learning setting, takes the form: $\nabla \E_{\gamma \sim {\cal G}} [\ell(y^*_{w,\gamma}, y)] =$
\[
\lim_{\epsilon \rightarrow 0} \frac{1}{\epsilon} \Big( \E_{\gamma \sim {\cal G}} [\nabla\mu_w(x, y^*_{w,\gamma}(\epsilon)) - \nabla \mu_w(x, y^*_{w,\gamma})]\Big). \label{eq:gdirect} 
\]
Here, $y^*_{w, \gamma}(\epsilon) = \arg \max_{\hat y} \{ \mu_w(x,\hat y) + \gamma(\hat y) + \epsilon \ell(y, \hat y)\}$. The advantage of using this framework in this setting is that it effortlessly elevates the mathematical difficulties in defining the gradient of the expected loss that exists in the direct loss minimization framework. Unfortunately, the random noise $\gamma(y)$ that is injected to the optimization may mask the signal $\mu_w(x,y)$ and thus get sub-optimal results in discriminative learning. To enjoy the best of both worlds, we propose to learn the proper amount of randomness to add to the discriminative learner. 

In this work we focus on learning discrete structured labels $y = (y_1,...,y_n)$. A general score function $\mu_w(x,y)$ cannot be computed efficiently for discrete structured labels $y = (y_1,...,y_n)$ since the number of possible labels is exponential in $n$ and a general score function $\mu_w(x,y)$ may assign a different value for each structure. Typically, such score functions are decomposed to localized score functions over small subsets $\alpha \subset \{1,...,n\}$ of variables where $y_\alpha = (y_i)_{i \in \alpha}$. The score function takes the form: $\mu_w(x,y) = \sum_{\alpha \in A} \mu_{w,\alpha}(x,y_\alpha)$. The correspondence between the exponential family of distributions $e^{\mu_w(x,y)}$ and the Gumbel-max trick requires an independent random variable $\gamma(y)$ for each of the exponentially many structures $y = (y_1,.,,,.y_n)$. However, since we are focusing on discriminative learning we are not limited by the Gumbel-max trick. Instead, we can use fewer random variables in order to learn the minimal amount of randomness to add. We limit our predictors to low-dimensional independent random variables $\gamma(y) = \sum_{i=1}^n \gamma_i(y_i)$, where $\gamma_i(y_i)$ are independent random variables for each index $i = 1,...,n$ and each $y_i$. In this setting, the number of random variables we are using is linear in $n$, compared to exponential many random variables in the Gubeml-max setting.


\section{Learning Structured Predictors}

In the following we present our main technical concept that derives the gradient of an expected loss using two structured predictions. In Section \ref{sec:direct} we prove the gradient step of an expected loss in the direct loss minimization framework. We also deduce that it holds whenever $y_w^*(\epsilon)$ is unique. Subsequently, in Section \ref{sec:direct_perturbed}, we show that low dimensional random perturbations $\gamma_i(y_i)$ are able to implicitly enforce uniqueness of the maximizing structure with probability one. In Section \ref{sec:direct_perturbed_snr}, we present our approach that learns the variance of the random perturbation, to ensure that the random noise $\gamma_i(y_i)$ does not mask the signal $\mu_w(x,y)$.

\subsection{Direct Loss Minimization}  
\label{sec:direct} 

We rely on the expected max-value that is perturbed by the loss function. This is the ``prediction generating function" in \citet{Lorberbom2018DirectOT}. In the direct loss minimization setting, as defined in Equation (\ref{eq:direct}), this function takes the form: 
\[
G(w,\epsilon) = \mathbb{E}_{{(x,y) \sim {\cal D}}} \Big[ \max_{\hat y \in Y} \big\{\mu_{w}(x, \hat y)  + \epsilon \ell(y, \hat y) \big\} \Big]  \label{eq:G_pred_direct} 
\]
The proof technique relies on the existence of the Hessian of $G(w,\epsilon)$ and the main challenge is to show that $G(w,\epsilon)$ is differentiable, i.e., there exists a vector $\nabla \mu_{w}(x, y^*(\epsilon))$ such that for any direction $u$, its corresponding directional derivative $\lim_{h \rightarrow 0} \frac{G(w + hu, \epsilon) - G(w,\epsilon)}{h}$ equals $\E_{(x,y) \sim {\cal D}}[\nabla_w \mu_{w}(x, y^*(\epsilon))^\top u]$. The proof builds a sequence of functions $\{g_n(u)\}_{n=1}^\infty$ that satisfies   
\begin{equation}
\lim_{h \rightarrow 0} \frac{G(w + hu, \epsilon) - G(w,\epsilon)}{h} = \lim_{n \rightarrow \infty} \E_{(x,y) \sim {\cal D}}[g_n(u)] \label{eq:grad1} 
\end{equation}
\begin{equation}
\E_{(x,y) \sim {\cal D}}[\lim_{n \rightarrow \infty}  g_n(u)] =\E_{(x,y) \sim {\cal D}}[\nabla_w \mu_{w}(x, y^*(\epsilon))^\top u]. \label{eq:grad2}
\end{equation}
The functions $g_n(u)$ correspond to the loss perturbed prediction $y_w^*(\epsilon)$ through the quantity $\mu_{w + \frac{1}{n} u}(x, \hat y)  + \epsilon \ell(y, \hat y)$. The key idea we are exploiting is that there exists $n_0$ such that for any $n \ge n_0$ the maximal argument $y_{w + \frac{1}{n} u}^*(\epsilon)$  does not change. 
\begin{lemma}
\label{lemma:maxfn}
Assume $\mu_w(x,y)$ are continuous functions of $w$ and that their loss-perturbed maximal argument $y^*_{w + \frac{1}{n} u}(\epsilon)$, which is defined in Equation (\ref{eq:ydirect}), is unique for any $u$ and $n$. Then there exists $n_0$ such that for $n \ge n_0$ there holds $y^*_{w + \frac{1}{n} u}(\epsilon) = y_w^*(\epsilon)$. 
\end{lemma}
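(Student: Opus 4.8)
The plan is to argue by contradiction, exploiting the finiteness of the label space $Y$ together with the strict gap created by the uniqueness assumption. First I would fix the data instance $(x,y)$, the direction $u$, and write $y^* \triangleq y_w^*(\epsilon)$ for the (unique) loss-perturbed maximizer at $w$. Uniqueness means that the ``runner-up gap''
\[
\delta \triangleq \big(\mu_w(x,y^*) + \epsilon \ell(y,y^*)\big) - \max_{\hat y \neq y^*}\big\{\mu_w(x,\hat y) + \epsilon \ell(y,\hat y)\big\}
\]
is strictly positive, since $Y$ is finite and $y^*$ is the strict arg max. This $\delta > 0$ is the quantitative handle that replaces the qualitative ``unique'' hypothesis.

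Next I would use continuity of each $\mu_w(x,\cdot)$ in $w$: for every $\hat y \in Y$, the map $t \mapsto \mu_{w + t u}(x,\hat y)$ is continuous at $t = 0$, so there is $\eta_{\hat y} > 0$ such that $|\mu_{w+tu}(x,\hat y) - \mu_w(x,\hat y)| < \delta/3$ whenever $|t| < \eta_{\hat y}$. Since $Y$ is finite I can take $\eta \triangleq \min_{\hat y \in Y}\eta_{\hat y} > 0$, so this bound holds simultaneously for all $\hat y$. Now choose $n_0$ with $1/n_0 < \eta$ (note $t = 1/n$, and the $\epsilon \ell(y,\hat y)$ term does not depend on $w$ and hence is unchanged). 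Then for any $n \ge n_0$ and any $\hat y \neq y^*$,
\[
\big(\mu_{w+\frac1n u}(x,y^*) + \epsilon \ell(y,y^*)\big) - \big(\mu_{w+\frac1n u}(x,\hat y) + \epsilon \ell(y,\hat y)\big) > \delta - \tfrac{2\delta}{3} = \tfrac{\delta}{3} > 0,
\]
which forces $y^*_{w+\frac1n u}(\epsilon) = y^* = y_w^*(\epsilon)$, as claimed.

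One subtlety I would be careful about is whether the statement is meant to hold for a fixed $(x,y)$ or uniformly over the support of $\mathcal D$; the $g_n$ construction in Equations (\ref{eq:grad1})--(\ref{eq:grad2}) suggests the lemma is applied pointwise in $(x,y)$ and the expectation/limit interchange is handled separately (e.g.\ by dominated convergence), so a pointwise $n_0 = n_0(x,y,u,\epsilon)$ suffices here and I would state it that way. The genuinely essential ingredient — and the only place the hypotheses are really used — is the passage from ``unique maximizer'' to ``strictly positive gap $\delta$,'' which is exactly where finiteness of $Y$ enters; everything after that is an $\varepsilon$/$3$ continuity argument. I do not expect a serious obstacle, though if $Y$ were allowed to be infinite (merely compact, say) one would need the $\mu$'s to be equicontinuous in $\hat y$ or the arg max to be attained with a uniform gap, so I would flag that the finiteness (or at least a uniform-gap condition) is being used implicitly.
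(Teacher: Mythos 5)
Your proof is correct and follows essentially the same route as the paper's: uniqueness of the maximizer on the finite set $Y$ yields a strict runner-up gap, and continuity of $\mu_w$ in $w$ preserves that gap for all sufficiently large $n$. In fact your version is more explicit than the paper's (which argues somewhat informally via convergence of the max values), and your observation that $n_0$ may depend pointwise on $(x,y)$, $u$, and $\epsilon$ matches how the lemma is actually used inside the dominated-convergence step of Theorem~\ref{main_theorem_direct}.
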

\begin{proof}
Let $f_n(y) = \mu_{w + \frac{1}{n} u}(x, y)  + \epsilon \ell(y, \hat y)$ so that $y^*_{w + \frac{1}{n} u}(\epsilon) = \arg \max_y f_n(y)$. Also, let $f_\infty(y) = \mu_w(x, y)  + \epsilon \ell(y, \hat y)$ so that $y^*_{w}(\epsilon) = \arg \max_y f_\infty(y)$. Since $f_n$ is a continuous function of then $\max_y  f_n(y)$ is also a continuous function and $\lim_{n \rightarrow \infty} \max_y  f_n(y) = \max_y  f_\infty(y)$. Since $\max_y  f_n(y) = f_n(y^*_{w + \frac{1}{n} u}(\epsilon))$ is arbitrarily close to $\max_y f_\infty(y) = f_\infty(y^*_{w}(\epsilon))$, and $y^*_w(\epsilon), y^*_{w + \frac{1}{n} u}(\epsilon)$ are unique then for any $n \ge n_0$ these two arguments must be the same, otherwise there is a $\delta > 0$ for which $| f_\infty(y^*_{w}(\epsilon)) - f_n(y^*_{w + \frac{1}{n} u}(\epsilon)) | \ge \delta$. 
\end{proof}
This lemma relies on the discrete nature of the label space, ensuring that the optimal label does not change in the vicinity of $y^*_w(\epsilon)$. This phenomena distinguishes the discrete label setting from the continuous relaxations of the label space \citep{NIPS2010_4107,berthet2020learning, Paulus2020GradientEW}. These relaxations of the label space utilize their continuities to differentiate through the label. In direct loss minimization, one works directly with the discrete label space which allows to control the maximal argument in infinitesimal interval. 

\begin{theorem}\label{main_theorem_direct}
Assume $\mu_{w}(x,y)$ is a smooth function of $w$ and that $E_{(x,y) \sim {\cal D}}\|\nabla_w \mu_{w}(x,y)\| \le \infty$. 
If the conditions of Lemma \ref{lemma:maxfn} hold then the prediction generating function $G(w,\epsilon)$, as defined in Equation (\ref{eq:G_pred_direct}), is differentiable and 
\[
    \frac{\partial G(w,\epsilon)}{\partial \epsilon} &=& \mathbb{E}_{(x,y) \sim {\cal D}}[\ell(y, y^*_{w})]. \\
    \frac{\partial G(w,\epsilon)}{\partial w} &=& \E_{(x,y) \sim {\cal D}} \Big[\nabla \mu_{w}(x, y^*(\epsilon)) \Big].
\]
\end{theorem}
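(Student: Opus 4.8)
The plan is to prove both identities by an envelope-theorem (Danskin-type) argument: differentiate the maximum that sits inside the expectation, using Lemma~\ref{lemma:maxfn} to freeze the maximizing label in an infinitesimal neighborhood, and then invoke dominated convergence to move the derivative past $\E_{(x,y)\sim\mathcal D}$.

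For $\partial G/\partial w$, fix a direction $u$ and for each $(x,y)$ set $g_n(u) = n\big(\max_{\hat y}\{\mu_{w+\frac1n u}(x,\hat y)+\epsilon\ell(y,\hat y)\}-\max_{\hat y}\{\mu_{w}(x,\hat y)+\epsilon\ell(y,\hat y)\}\big)$, so that $\E[g_n(u)]$ is exactly the difference quotient $n\,(G(w+\frac1n u,\epsilon)-G(w,\epsilon))$, which is Equation~(\ref{eq:grad1}). By Lemma~\ref{lemma:maxfn} there is some $n_0=n_0(x,y)$ such that for $n\ge n_0$ both maxima are attained at the common label $y^*_w(\epsilon)$, so $g_n(u)=n\big(\mu_{w+\frac1n u}(x,y^*_w(\epsilon))-\mu_w(x,y^*_w(\epsilon))\big)$, and smoothness of $\mu_w$ in $w$ gives the pointwise limit $g_n(u)\to\nabla_w\mu_w(x,y^*_w(\epsilon))^\top u$, which is Equation~(\ref{eq:grad2}). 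To interchange limit and expectation I would bound, for every $n$, $|g_n(u)|\le\|u\|\sup_{\|w'-w\|\le\|u\|}\sup_{\hat y}\|\nabla_w\mu_{w'}(x,\hat y)\|$ (using $|\max_{\hat y}f-\max_{\hat y}g|\le\max_{\hat y}|f-g|$ together with the mean value theorem), so the integrability hypothesis, read uniformly over this ball and over labels, supplies the dominating function and dominated convergence gives the directional derivative $\E[\nabla_w\mu_w(x,y^*(\epsilon))]^\top u$.

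To upgrade this to genuine differentiability of $G$ in $w$, I would rerun the computation with an arbitrary vector increment $v\to 0$ in place of $\frac1n u$: the proof of Lemma~\ref{lemma:maxfn} used only continuity of $\mu_w$ in $w$ and uniqueness of the maximizer, hence applies verbatim to $w+v$, and one obtains $\big(G(w+v,\epsilon)-G(w,\epsilon)-\E[\nabla_w\mu_w(x,y^*(\epsilon))]^\top v\big)/\|v\|\to 0$, i.e.\ $\partial G/\partial w=\E[\nabla\mu_w(x,y^*(\epsilon))]$. The identity for $\partial G/\partial\epsilon$ follows by the identical route with $\epsilon$ playing the role of $w$: as $h\to 0$ the loss-perturbed maximizer $y^*_w(\epsilon+h)$ equals $y^*_w(\epsilon)$ for all small $h$ (again by continuity plus uniqueness), the integrand's difference quotient converges pointwise to $\ell(y,y^*_w(\epsilon))$, and boundedness of $\ell$ provides trivial domination.

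The step I expect to be the crux is the interchange of the $h\to 0$ limit with $\E_{(x,y)\sim\mathcal D}$. Pointwise convergence of the difference quotients is not automatic, because the maximizing label jumps discontinuously as $w$ (or $\epsilon$) moves; it is precisely the uniqueness/general-position hypothesis, through Lemma~\ref{lemma:maxfn}, that freezes the argmax for all large $n$ and makes the pointwise limit a smooth quantity. Producing an honest dominating function then forces the integrability assumption $\E\|\nabla_w\mu_w(x,y)\|<\infty$ to be understood as a bound holding uniformly over labels $\hat y$ and over a neighborhood of $w$, not merely at the training label $y$ and the single parameter $w$.
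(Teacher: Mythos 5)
Your proof follows essentially the same route as the paper's: the same difference-quotient functions $g_n(u)$ built from $\max_{\hat y}\{\mu_{w+\frac1n u}(x,\hat y)+\epsilon\ell(y,\hat y)\}$, the same use of Lemma~\ref{lemma:maxfn} to freeze the maximizer for $n\ge n_0$ so the loss terms cancel, and dominated convergence to exchange the limit with $\E_{(x,y)\sim\mathcal D}$ (with the $\partial G/\partial\epsilon$ identity obtained by the identical argument with $\epsilon$ in the role of $w$, as the paper does inside Corollary~\ref{directloss_gradient_cor}). If anything, your treatment of the domination step is more careful than the paper's, which merely asserts that $\E\|\nabla_w\mu_w(x,y)\|<\infty$ makes $\E[g_n]$ finite; your explicit bound $|g_n(u)|\le\|u\|\sup_{\|w'-w\|\le\|u\|}\sup_{\hat y}\|\nabla_w\mu_{w'}(x,\hat y)\|$ and the observation that the integrability hypothesis must be read uniformly over labels and over a neighborhood of $w$ is a correct sharpening of what the theorem actually requires.
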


\begin{proof}
Let $f_n(y) = \mu_{w + \frac{1}{n} u}(x, y)  + \epsilon \ell(y, \hat y)$ as in Lemma \ref{lemma:maxfn} and let 
\[
    g_n(u) \triangleq \frac{\max_{\hat y \in Y} f_n( \hat y) - \max_{\hat y \in Y} f_\infty(\hat y)}{1/n} \label{eq:gn_direct}
\]
We apply the dominated convergence theorem on $g_n(u)$, so that $ \lim_{n \rightarrow \infty} \E_{(x,y) \sim {\cal D}} [g_n(u)] = \E_{(x,y) \sim {\cal D}} [\lim_{n \rightarrow \infty}  g_n(u)]$ in order to prove Equations (\ref{eq:grad1},\ref{eq:grad2}). 
We note that we may apply the dominated convergence theorem, since the conditions $\E_{(x,y) \sim D} \|\nabla_w \mu_{w}(x,y)\| \le \infty$ imply that the expected value of $g_n$ is finite (We recall that $f_n$ is a measurable function, and note that since $\hat y \in Y$ is an element from a discrete set $Y$, then $g_n$ is also a measurable function.).

From Lemma \ref{lemma:maxfn}, the terms $ \ell(y, y^*(\epsilon))$ are identical in both $\max_{\hat y \in Y} f_n(\hat y) $ and $\max_{\hat y \in Y} f_\infty(\hat y)$. Therefore, they cancel out when computing the difference $\max_{\hat y \in Y} f_n(\hat y)  - \max_{\hat y \in Y} f_\infty(\hat y) $. Then, for $n \ge n_0$:
\begin{equation*}
\max_{\hat y \in Y} f_n(\hat y) - \max_{\hat y \in Y} f_\infty(\hat y) = \mu_{w + \frac{1}{n} u}(x, y^*(\epsilon))
-\mu_{w}(x, y^*(\epsilon))
\end{equation*}
and Equation (\ref{eq:gn_direct}) becomes:
\begin{equation}
g_n(u) = \frac{ \mu_{w + \frac{1}{n} u}(x, y^*(\epsilon)) - \mu_{w}(x, y^*(\epsilon))}{1/n} \label{gn_max_f_direct}.
\end{equation}
Since $\mu_{w}(x, y^*(\epsilon))$ is smooth, then $\lim_{n \rightarrow \infty} g_n(u) $ is composed of the derivatives of $\mu_{w}(x, y^*(\epsilon))$ in direction $u$, namely, $\lim_{n \rightarrow \infty} g_n(u)= \nabla_w \mu_{w}(x, y^*(\epsilon))^\top u$. 
\end{proof}
In the above theorem we assume that $\mu_w(x,y)$ is smooth, namely it is infinitely differentiable. It suffices to assume that $\mu_w(x,y)$ is twice differentiable, to ensure that $G(w,\epsilon)$ is twice differentiable and hence its Hessian exists.

\begin{corollary}\label{directloss_gradient_cor}
Under the conditions of Theorem \ref{main_theorem_direct},  $\mathbb{E}_{(x,y) \sim {\cal D}}[\ell(y, y^*_{w})]$ is differentiable and its derivative is defined in Equation (\ref{eq:direct}).
\end{corollary}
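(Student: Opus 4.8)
The plan is to read Equation~(\ref{eq:direct}) as the assertion that the two mixed second-order partial derivatives of the prediction generating function $G(w,\epsilon)$ of Equation~(\ref{eq:G_pred_direct}) agree at $\epsilon=0$. Theorem~\ref{main_theorem_direct} already supplies the two first-order partials, which as functions of $(w,\epsilon)$ read $\partial G/\partial\epsilon = \E_{(x,y)\sim{\cal D}}[\ell(y,y^*_w(\epsilon))]$ and $\partial G/\partial w = \E_{(x,y)\sim{\cal D}}[\nabla_w\mu_w(x,y^*_w(\epsilon))]$, and both reduce at $\epsilon=0$ to $\E[\ell(y,y^*_w)]$ and $\E[\nabla_w\mu_w(x,y^*_w)]$ since $y^*_w(0)=y^*_w$. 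The remark following Theorem~\ref{main_theorem_direct} records that when $\mu_w(x,y)$ is twice differentiable, $G(w,\epsilon)$ is twice differentiable, so its Hessian exists; being a (Fr\'echet) second derivative, this Hessian is a symmetric bilinear form, which is exactly the statement $\frac{\partial}{\partial w}\frac{\partial G}{\partial\epsilon} = \frac{\partial}{\partial\epsilon}\frac{\partial G}{\partial w}$.

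First I would write out the right-hand side of this identity at $\epsilon=0$. By the definition of the $\epsilon$-derivative and $y^*_w(0)=y^*_w$,
\begin{multline*}
\frac{\partial}{\partial\epsilon}\Big|_{\epsilon=0}\E_{(x,y)\sim{\cal D}}\big[\nabla_w\mu_w(x,y^*_w(\epsilon))\big]\\
= \lim_{\epsilon\to 0}\frac{1}{\epsilon}\Big(\E_{(x,y)\sim{\cal D}}\big[\nabla_w\mu_w(x,y^*_w(\epsilon)) - \nabla_w\mu_w(x,y^*_w)\big]\Big),
\end{multline*}
which is precisely the right-hand side of Equation~(\ref{eq:direct}). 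Next I would identify the left-hand side at $\epsilon=0$ as $\nabla_w\big(\partial G/\partial\epsilon\big)\big|_{\epsilon=0} = \nabla_w\,\E_{(x,y)\sim{\cal D}}[\ell(y,y^*_w)]$. Since $G$ is twice differentiable, $\partial G/\partial\epsilon$ is in particular differentiable in $w$, so this gradient exists; equating the two sides then yields both conclusions of the corollary at once: $\E_{(x,y)\sim{\cal D}}[\ell(y,y^*_w)]$ is differentiable, and its gradient is the limit appearing in Equation~(\ref{eq:direct}).

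The step I expect to be the main obstacle is the justification that the two mixed partials commute, since this is equivalent to (indeed slightly stronger than) the claim made in the remark after Theorem~\ref{main_theorem_direct} that $G$ is twice differentiable. Making it rigorous amounts to rerunning the argument of Lemma~\ref{lemma:maxfn} and Theorem~\ref{main_theorem_direct} one differentiation order higher: one applies the local constancy of the unique maximizer $y^*_w(\epsilon)$ (Lemma~\ref{lemma:maxfn}, together with an envelope/Danskin-type argument in the $\epsilon$-direction) to the map $(w,\epsilon)\mapsto\E[\nabla_w\mu_w(x,y^*_w(\epsilon))]$, and uses the dominated convergence theorem --- now under a finiteness hypothesis on the second derivatives of $\mu_w$ that is implicit in ``twice differentiable'' --- to move the remaining derivative inside the expectation. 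Once $G\in C^2$ is secured this way, symmetry of its Hessian is automatic and the rest is the bookkeeping above.
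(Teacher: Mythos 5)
Your proposal is correct and follows essentially the same route as the paper: identify the limit in Equation~(\ref{eq:direct}) with $\partial_\epsilon \partial_w G(w,0)$, identify $\partial_w \partial_\epsilon G(w,0)$ with $\nabla_w \E_{(x,y)\sim{\cal D}}[\ell(y,y^*_w)]$ (the paper re-derives $\partial_\epsilon G$ via $f_n(\hat y)=\mu_w(x,\hat y)+\tfrac{1}{n}\ell(y,\hat y)$, while you read it off Theorem~\ref{main_theorem_direct} directly), and conclude by symmetry of the Hessian. Your closing remark correctly pinpoints the one step the paper also leaves implicit, namely the justification that $G$ is $C^2$ so the mixed partials commute.
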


\begin{proof}
Since Theorem \ref{main_theorem_direct} holds for every direction $u$:
\begin{equation*}
\frac{\partial G(w,\epsilon)}{\partial w} = \E_{(x,y) \sim \cal D} \Big[\nabla_w \mu_{w}(x, y^*(\epsilon)) \Big].
\end{equation*}
Adding a derivative with respect to $\epsilon$ we get: 
\begin{flalign*}
\frac{\partial}{\partial \epsilon} \frac{\partial G(w,0)}{\partial w} &=\\
\lim_{\epsilon \rightarrow 0} \frac{1}{\epsilon} \E_{(x,y) \sim \cal D} \Big[& \nabla_w \mu_{w}(x, y^*(\epsilon)) - \nabla_w \mu_{w}(x, y^*)  \Big]
\end{flalign*}
The proof follows by showing that the gradient computation is apparent in the Hessian, namely Equation (\ref{eq:direct}) is attained by the identity $\partial_w \partial_\epsilon G(w,0) = \partial_\epsilon \partial_w G(w,0)$. 
Now we turn to show that $\partial_w \partial_\epsilon G(w,0) = \nabla_{w} \mathbb{E}_{(x,y) \sim \cal D}[\ell(y, y^*_{w})]$. Since $\epsilon$ is a real valued number rather than a vector, we do not need to consider the directional derivative, which greatly simplifies the mathematical derivations. We define $f_n(\hat y) \triangleq   \mu_{w}(x, \hat y) + \frac{1}{n} \ell(y, \hat y)$ and follow the same derivation as above to show that $\partial_\epsilon G(w,0) = \mathbb{E}_{(x,y) \sim \cal D}[\ell(y, y^*_{w})]$. Therefore $\partial_w \partial_\epsilon G(w,0) = \nabla_{w} \mathbb{E}_{(x,y) \sim \cal D}[\ell(y, y^*_{w})]$.
\end{proof}

We note the strong conditions that require the theorem to hold: the loss-perturbed maximal argument $y^*_{w + \frac{1}{n} u}(\epsilon)$, which is defined in Equation (\ref{eq:ydirect}), is unique for any $u$ and $n$. Unfortunately, this condition does not hold in some cases, e.g., when $w = 0$. Next we show that with added random perturbation we can ensure this holds with probability one. 

\subsection{Randomly Perturbing Structured Predictors} 
\label{sec:direct_perturbed}

We turn to show that randomly perturbing the structured signal $\mu_w(x,y) = \sum_{\alpha \in {\cal A}} \mu_\alpha(x,y_\alpha)$ with smooth random noise $\gamma_i(y_i)$ allows us to implicitly enforce the uniqueness condition. To account for the structured signal and the low-dimensional random perturbation we define the set $y^*_{w, \gamma}(\epsilon) =$
\[
 \arg \max_{\hat y} \Big\{ \sum_{\alpha \in {\cal A}} \mu_{w,\alpha}(x,\hat y_\alpha) + \sum_{i=1}^n \gamma_i(\hat y_i) + \epsilon \ell(y, \hat y) \Big\}. \label{eq:argmax}
\]
To reason about the set of maximal structures of $y^*_{w, \gamma}(\epsilon)$, we introduce the set of random perturbation $\Gamma_\epsilon(y')$ which consists of all random values $\gamma_i(y_i)$ for which $y'$ is their maximal structure:  $\Gamma_\epsilon(y') = $
\begin{equation}
\left\{ 
{\everymath={\displaystyle}
\begin{array}{lll} 
\gamma &:&  \sum_{\alpha \in {\cal A}} \mu_{w,\alpha}(x,\hat y'_\alpha) + \sum_{i=1}^n \gamma_i(\hat y'_i) +\epsilon \ell(y, \hat y') \\
&&\ge  \\
&& \forall \hat y  \sum_{\alpha \in {\cal A}} \mu_{w,\alpha}(x,\hat y_\alpha) + \sum_{i=1}^n \gamma_i(\hat y_i) +\epsilon \ell(y, \hat y) \\
\end{array}}
\right\}
\end{equation}

Whenever the set in Equation (\ref{eq:argmax}) consists of more than a single structure, say $y'$ and $y''$, their corresponding sets $\Gamma_\epsilon(y')$ and $\Gamma_\epsilon(y'')$ intersect. We now prove that this happens with zero probability whenever $\gamma_i(y_i)$ are i.i.d. and with a smooth probability density function.

\begin{theorem}\label{uniqueness}
Let $\gamma_i(y_i)$ be i.i.d. random variables with a smooth probability density function. Then the set of maximal arguments in Equation (\ref{eq:argmax}) consists of a single structure with probability one. 
\end{theorem}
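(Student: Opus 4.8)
The plan is a union bound over pairs of distinct structures, which reduces the claim to the elementary fact that a real random variable with a density assigns probability zero to any prescribed single point. First I would note that each coordinate $y_i$ ranges over a finite alphabet, so the label set $Y$ is finite and there are only finitely many ordered pairs $(y',y'')$ with $y'\neq y''$. The event that the argmax in Equation~(\ref{eq:argmax}) fails to be a singleton is exactly $\bigcup_{y'\neq y''}\{\gamma\in\Gamma_\epsilon(y')\cap\Gamma_\epsilon(y'')\}$, since the argmax over a finite set is nonempty and it contains two distinct structures iff some pair is jointly maximal. By finite subadditivity it therefore suffices to show $\P_\gamma[\gamma\in\Gamma_\epsilon(y')\cap\Gamma_\epsilon(y'')]=0$ for each fixed pair.

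So fix distinct $y',y''$. If $\gamma$ lies in both $\Gamma_\epsilon(y')$ and $\Gamma_\epsilon(y'')$, then both structures attain the maximum, hence their perturbed objective values coincide; moving the $\gamma$-dependent terms to one side, this reads $\sum_{i=1}^n\big(\gamma_i(y'_i)-\gamma_i(y''_i)\big)=c$, where $c=\sum_\alpha\mu_{w,\alpha}(x,y''_\alpha)-\sum_\alpha\mu_{w,\alpha}(x,y'_\alpha)+\epsilon\big(\ell(y,y'')-\ell(y,y')\big)$ is a constant independent of $\gamma$. Because $y'\neq y''$, there is an index $i_0$ with $y'_{i_0}\neq y''_{i_0}$. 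The variable $\gamma_{i_0}(y'_{i_0})$ appears in this linear relation with coefficient $+1$, and — this is the point that needs care — it appears nowhere else in the relation: for $j\neq i_0$ the variables $\gamma_j(\cdot)$ carry a different position index, and $\gamma_{i_0}(y''_{i_0})$ is a genuinely different variable since $y''_{i_0}\neq y'_{i_0}$. Hence the relation can be rewritten as $\gamma_{i_0}(y'_{i_0})=h\big(\{\gamma_j(a):(j,a)\neq(i_0,y'_{i_0})\}\big)$ for an explicit affine function $h$ of the remaining perturbation variables.

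Finally I would condition on all perturbation variables other than $\gamma_{i_0}(y'_{i_0})$. By independence the conditional law of $\gamma_{i_0}(y'_{i_0})$ is still its marginal, which has a density (the assumed smoothness is more than enough; absolute continuity, i.e.\ atomlessness, is what is used), so it assigns zero mass to the single value $h(\cdot)$ fixed by the conditioning. Integrating out, $\P_\gamma[\gamma\in\Gamma_\epsilon(y')\cap\Gamma_\epsilon(y'')]=\E\big[\,\P(\gamma_{i_0}(y'_{i_0})=h(\cdot)\mid\text{rest})\,\big]=0$, and summing over the finitely many pairs yields the theorem. The only real obstacle is the non-degeneracy step in the middle paragraph — verifying that the coefficient of $\gamma_{i_0}(y'_{i_0})$ does not cancel, so that the difference of the two perturbed scores is a nontrivial affine function of $\gamma$ — and this is precisely where the choice of $i_0$ with $y'_{i_0}\neq y''_{i_0}$ together with the additive low-dimensional form $\gamma(y)=\sum_i\gamma_i(y_i)$ of the perturbation enters.
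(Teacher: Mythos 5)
Your proposal is correct and follows essentially the same route as the paper: both reduce non-uniqueness to the event that a nontrivial affine relation $\sum_{i:\,y'_i\neq y''_i}\bigl(\gamma_i(y'_i)-\gamma_i(y''_i)\bigr)=c$ holds among the independent perturbations, and conclude this has probability zero. The only cosmetic differences are that you make the finite union bound over pairs explicit and establish non-degeneracy by conditioning on all variables except one $\gamma_{i_0}(y'_{i_0})$, whereas the paper instead observes that the sum of the independent differences over the disagreement set $\beta$ itself has a density; both are valid.
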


\begin{proof}
Consider there is an event (a set) of $\gamma$ for which the set of maximal arguments consists of more than one structure, e.g., $y'$ and $y''$ and denote it by $\Omega$. Clearly, $\Omega \subset \Gamma_\epsilon(y') \cap \Gamma_\epsilon(y'')$. Let $\beta$ be the set of indexes for which $y'_i \ne y''_i$. Since for any $\gamma \in \Omega$ it holds that $\sum_{\alpha \in {\cal A}} \mu_{w,\alpha}(x,\hat y'_\alpha) + \sum_{i=1}^n \gamma_i(\hat y'_i) +\epsilon \ell(y, \hat y') = \sum_{\alpha \in {\cal A}} \mu_{w,\alpha}(x,\hat y''_\alpha) + \sum_{i=1}^n \gamma_i(\hat y''_i) +\epsilon \ell(y, \hat y'')$, then $\Omega \subset \{\gamma: \sum_{i \in \beta} \gamma_i(\hat y'_i) - \gamma_i(\hat y''_i) = c\}$ for the constant $c = \mu_{w,\alpha}(x,\hat y''_\alpha) - \mu_{w,\alpha}(x,\hat y'_\alpha) + \epsilon \ell(y, \hat y'')- \epsilon \ell(y, \hat y')$. Since $\gamma_i(\hat y'_i) - \gamma_i(\hat y''_i)$ are independent random variables with smooth probability density function, then their sum also has a smooth probability density function. Consequently the probability that $\sum_{i \in \beta} \gamma_i(\hat y'_i) - \gamma_i(\hat y''_i) = c$ is zero, and thus $\P_\gamma[\Omega] = 0$. 
\end{proof}

We note that the uniqueness of the maximal structure of $y^*_{w,\gamma}$ can be proved by Theorem \ref{uniqueness} as well, in which case the constant $c = \mu_{w,\alpha}(x,\hat y''_\alpha) - \mu_{w,\alpha}(x,\hat y'_\alpha)$.
It follows from the above theorem that adding random perturbations solves the uniqueness problem in direct loss gradient rule. Unfortunately, as we show in our experimental evaluation, the random perturbation that smooths the objective can also serve as noise that masks the signal $\mu_w(x,y)$. To address this caveat, we propose to learn the magnitude, i.e., the variance, of this noise explicitly. 

\subsection{Learning The Variance Of Randomly Perturbed Structured Predictors}   
\label{sec:direct_perturbed_snr}

\begin{figure*}[t]
\begin{subfigure}[t]{\columnwidth}
    \includegraphics[width=1.0\linewidth]{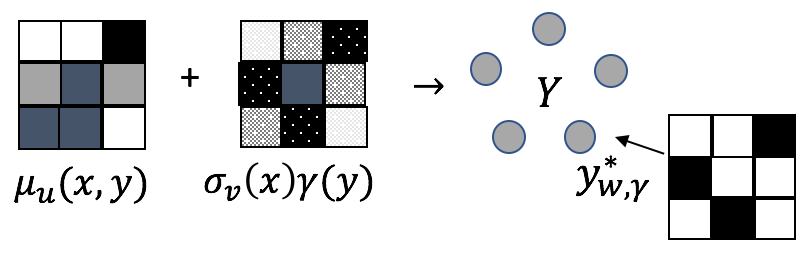}
    \caption{A randomly perturbed structured prediction illustration}
  \label{structured_perturb_illustration}
  \end{subfigure}
  \begin{subfigure}[t]{.51\columnwidth}
    \includegraphics[width=\linewidth]{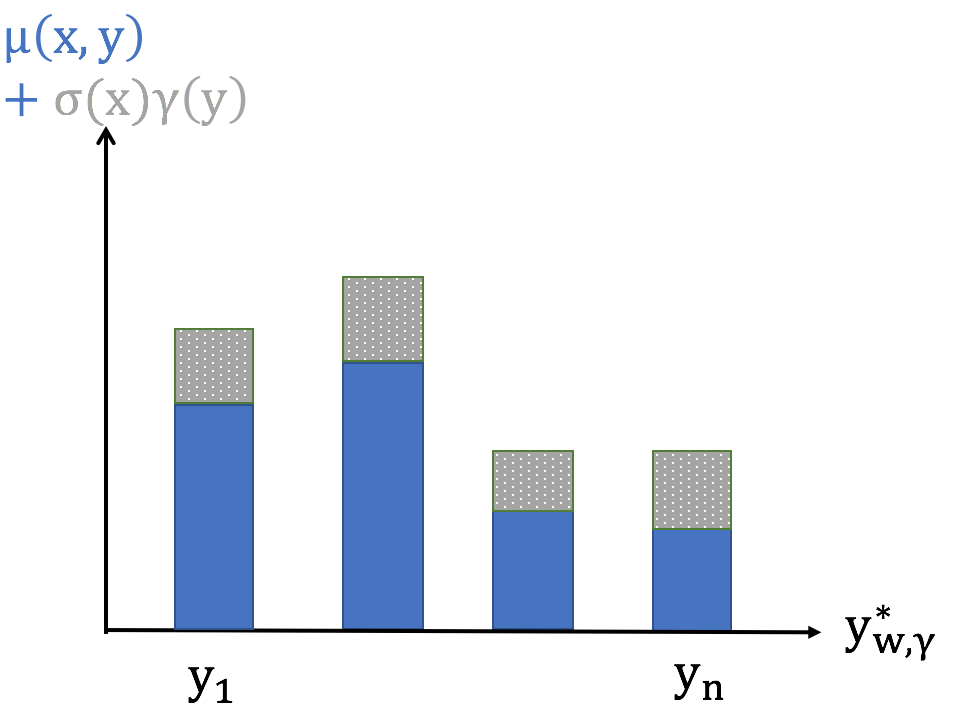}
    \caption{The max predictor probability distribution for a low $\sigma(x)$.}
  \label{low_sigma}
  \end{subfigure}%
  \hfill
  \begin{subfigure}[t]{.51\columnwidth}
    \includegraphics[width=\linewidth]{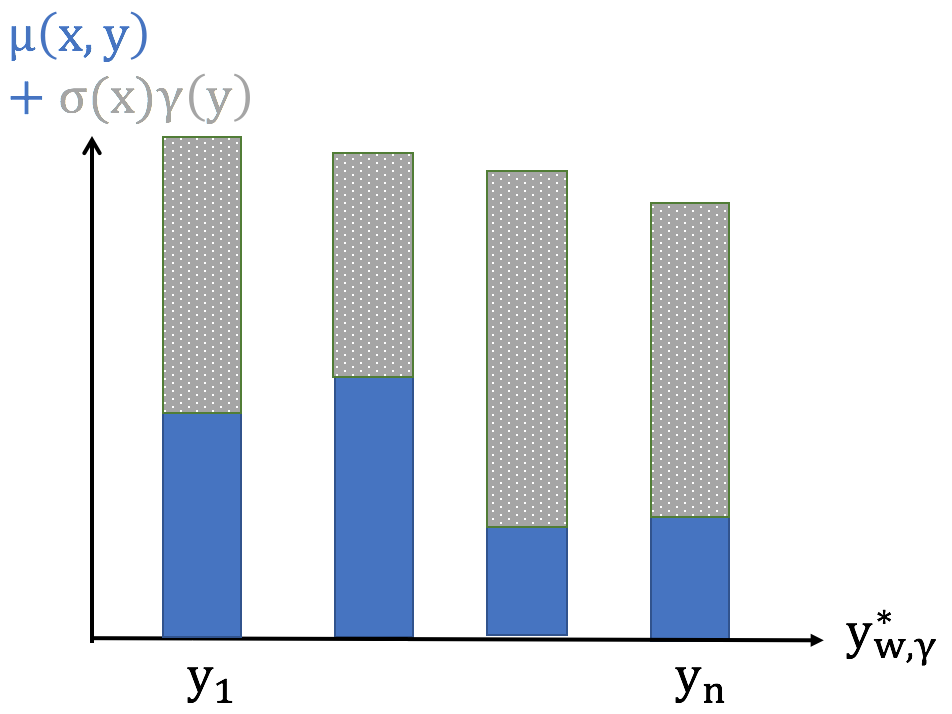}
    \caption{The max predictor probability distribution for a high $\sigma(x)$.}
      \label{high_sigma}
  \end{subfigure}
\caption{The randomized predictor $y^*_{w,\gamma}$ is the structure that maximizes the randomly perturbed scoring function among all possible structures in $Y$ (Figure \ref{structured_perturb_illustration}). As $\sigma(x)$ decreases, the expected max predictor approaches the expected value of a categorical random variable (Figure \ref{low_sigma}). And vice versa, as $\sigma(x)$ increases, the expected max predictor converges to a uniform distribution over the discrete structures (Figure \ref{high_sigma}).} 
\label{sigma_smoothness}
\vspace{-\baselineskip}
\end{figure*}

We propose to learn the magnitude of the random perturbation $\gamma_i(y_i)$. In our setting it translates to the prediction $y^*_{w,\gamma}=$
\begin{equation}
 \arg \max_{\hat y} \Big\{ \sum_{\alpha \in {\cal A}} \mu_{u,\alpha}(x,\hat y_\alpha) + \sum_{i=1}^n \sigma_v(x) \gamma_i(\hat y_i) \Big\} 
\end{equation}
$w = (u,v)$ are the learned parameters. In this case we treat $\sum_{\alpha \in {\cal A}} \mu_{u,\alpha}(x,\hat y_\alpha) + \sum_{i=1}^n \sigma_v(x) \gamma_i(\hat y_i)$ as a random variable whose mean is learned using $\mu_{u,\alpha}(x,\hat y_\alpha) $ and its variance is learned using $\sigma_{v}(x)$. As such, we consider a strictly positive $\sigma_{v}(x)$ both theoretically and practically.
We are learning the same variance $\sigma_{v}(x)$ for all random assignments $\gamma_i(y_i)$. We do so to learn to balance the overall noise $\sum_{i=1}^n \gamma_i(y_i)$ with the signal $\sum_{\alpha} \mu_{u,\alpha}(x,\hat y_\alpha)$. The learned variance $\sigma_v(x)$ allows us to interpolate between the original direct loss setting, where $\sigma_v(x)=0$, to the generative learning setting, where $\sigma_v(x)=1$.      

\begin{corollary}\label{corollary_DSL}
Assume $\mu_{u}(x,y), \sigma_v(x)$ are smooth functions of $w = (u,v)$. Let $\gamma_i(y_i)$ be i.i.d. random variables with a smooth probability density function. Let $G(w,\epsilon) =$
\begin{equation}
\mathbb{E}_{\gamma \sim {\cal G}} \Big[   \max_{\hat y \in Y} \Big\{ \sum_{\alpha \in {\cal A}} \mu_{u,\alpha}(x,\hat y_\alpha) + \sum_{i=1}^n \sigma_v(x) \gamma_i(\hat y_i) + \epsilon \ell(y, \hat y)  \Big\}  \Big].  
\end{equation}
Then $G(w,\epsilon)$ is a smooth function and $\frac{\partial}{\partial u} \mathbb{E}_{\gamma}[\ell(y, y^*_{w, \gamma})] =$ 
\[
\lim_{\epsilon\to 0} \frac{1}{\epsilon} \mathbb{E}_{\gamma}\Big[ \sum_{\alpha \in {\cal A}} \left( \nabla \mu_{u,\alpha}(x,y^*_\alpha(\epsilon)) - \nabla \mu_{u,\alpha}(x,y^*_\alpha) \right)  \Big] \label{eq:direct_mu} \]
and  $\frac{\partial}{\partial v} \mathbb{E}_{\gamma}[\ell(y, y^*_{w, \gamma})] =$ 
\[ 
\lim_{\epsilon\to 0} \frac{1}{\epsilon} \mathbb{E}_{\gamma} \Big[\sum_{i=1}^n  \nabla \sigma_{v}(x) \Big(\gamma_i (y^*_i(\epsilon)) - \gamma_i(y^*_i)\Big)\Big] .  \label{eq:direct_sigma}
\]
\end{corollary}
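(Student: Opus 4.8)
The plan is to reduce Corollary~\ref{corollary_DSL} to the machinery already built for Theorem~\ref{main_theorem_direct} and Corollary~\ref{directloss_gradient_cor}, applied to an enlarged parameter vector and with the expectation taken over $\gamma$ rather than over the data. First I would observe that the objective inside the expectation is, for each fixed $\gamma$, exactly of the form treated in Theorem~\ref{main_theorem_direct}: it is $\max_{\hat y}\{\tilde\mu_w(x,\hat y) + \epsilon\ell(y,\hat y)\}$ where $\tilde\mu_w(x,\hat y) = \sum_{\alpha}\mu_{u,\alpha}(x,\hat y_\alpha) + \sigma_v(x)\sum_i \gamma_i(\hat y_i)$ is a smooth function of $w=(u,v)$ (smooth because $\mu_u$ and $\sigma_v$ are assumed smooth, and $\gamma$ is a fixed constant vector). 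The role played by the data distribution ${\cal D}$ in Theorem~\ref{main_theorem_direct} is now played by the Gumbel law ${\cal G}$, so I would check the two hypotheses: that $\E_{\gamma}\|\nabla_w\tilde\mu_w(x,\hat y)\| < \infty$ (this holds since $\E_{\gamma}|\gamma_i(\hat y_i)| < \infty$ for the Gumbel distribution and $\nabla_w\tilde\mu_w$ is linear in the $\gamma_i$'s with smooth coefficients), and that the loss-perturbed maximizer $y^*_{w+\frac1n u, \gamma}(\epsilon)$ is almost surely unique. The uniqueness is supplied by Theorem~\ref{uniqueness}, since the Gumbel density is smooth; note that to invoke the perturbed version of Lemma~\ref{lemma:maxfn} one needs uniqueness for a countable family of directions/scales $(u,n)$ simultaneously, which still holds a.s.\ as a countable union of null events.

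Granting that, Theorem~\ref{main_theorem_direct} applied pointwise in $\gamma$ — or rather the same $g_n$ / dominated-convergence argument run with $\E_{\gamma}$ in place of $\E_{(x,y)\sim{\cal D}}$ — yields that $G(w,\epsilon)$ is differentiable with $\partial_\epsilon G(w,\epsilon) = \E_{\gamma}[\ell(y,y^*_{w,\gamma})]$ and $\partial_w G(w,\epsilon) = \E_{\gamma}[\nabla_w\tilde\mu_w(x,y^*_{w,\gamma}(\epsilon))]$. The gradient $\nabla_w\tilde\mu_w$ splits along the block structure $w=(u,v)$: the $u$-block is $\sum_\alpha \nabla_u\mu_{u,\alpha}(x,y^*_\alpha(\epsilon))$ (the $\sigma_v$ term does not depend on $u$), and the $v$-block is $\sum_i \nabla_v\sigma_v(x)\,\gamma_i(y^*_i(\epsilon))$ (the $\mu_{u,\alpha}$ term does not depend on $v$). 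Then I would repeat the Hessian-symmetry step of Corollary~\ref{directloss_gradient_cor}: since $\mu_u,\sigma_v$ are smooth, $G$ is twice continuously differentiable, so $\partial_w\partial_\epsilon G(w,0) = \partial_\epsilon\partial_w G(w,0)$, and evaluating $\partial_\epsilon$ of the two blocks of $\partial_w G$ at $\epsilon=0$ gives precisely the two claimed limit formulas (the $\epsilon$-independent terms $\nabla\mu_{u,\alpha}(x,y^*_\alpha)$ and $\nabla\sigma_v(x)\gamma_i(y^*_i)$ appear because differentiating in $\epsilon$ produces the difference quotient $\frac1\epsilon(\,\cdot\,(\epsilon) - \,\cdot\,(0))$, exactly as in Equation~(\ref{eq:direct})).

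The main obstacle I expect is not the algebra of splitting the gradient but justifying the interchange of limits and expectation under ${\cal G}$: specifically, (i) that the dominated convergence theorem still applies when the noise $\gamma$ is unbounded — one needs an integrable dominating function for $g_n(u)$, which should follow from $|g_n(u)| \le \sup_{\hat y}\|\nabla_w\tilde\mu_w(x,\hat y)\|\cdot\|u\|$ (a mean-value bound) together with finiteness of $\E_\gamma\|\nabla_w\tilde\mu_w(x,\hat y)\|$; and (ii) the a.s.\ uniqueness of $y^*_{w+\frac1n u,\gamma}(\epsilon)$ \emph{for all $n$ at once}, handled by a countable union of null sets as noted above. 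A secondary subtlety is whether $\sigma_v(x)$ can vanish: if $\sigma_v(x)=0$ the perturbation degenerates and uniqueness can fail, so the corollary should be read under the standing assumption (stated just before it) that $\sigma_v(x)>0$, which I would make explicit. Once these measure-theoretic points are in place, the rest is a direct transcription of the earlier proofs with the block decomposition of $\nabla_w$.
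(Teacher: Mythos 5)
Your proposal is correct and follows essentially the same route as the paper's supplementary-material proof, which likewise re-runs the Lemma~\ref{lemma:maxfn} / Theorem~\ref{main_theorem_direct} / Hessian-symmetry machinery with $\E_{\gamma\sim{\cal G}}$ in place of $\E_{(x,y)\sim{\cal D}}$ and with the perturbed score $\sum_{\alpha}\mu_{u,\alpha}(x,\hat y_\alpha)+\sigma_v(x)\sum_i\gamma_i(\hat y_i)$ in place of $\mu_w$, splitting the gradient into its $u$- and $v$-blocks exactly as you describe (the paper simply writes the two blocks as two separate sequences $g_n$ and $g_n'$). Your added remarks on almost-sure uniqueness holding simultaneously over the countable family of directions and scales, and on the need for $\sigma_v(x)>0$, address points the paper treats only implicitly.
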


We prove Corollary \ref{corollary_DSL} in the supplementary material.

The random perturbation induces a probability distribution over structures $y$. As $\sigma$ increases, the expected max predictor tends to a uniform distribution over the discrete structures. Similarly, as $\sigma$ decreases, the expected max predictor approaches a deterministic decision over the discrete structures. This idea is illustrated in Figure \ref{sigma_smoothness}.

Interestingly, whenever the random variables $\gamma(y)$ follow the zero mean Gumbel distribution law, the random variable $\mu_u (x,\hat y) + \sigma_v(x) \gamma(\hat y)$ follows the Gumbel distribution law with mean $\mu_u(x,y)$ and variance $\sigma^2 \pi^2 / 6$. In this case, the variance turns to be the temperature of the corresponding Gibbs distribution: $\P_{\gamma \sim {\cal G}}[\arg \max_{\hat y} \{ \mu_u (x,\hat y) + \sigma_v(x) \gamma(\hat y) \} = y] \propto e^{\mu_u(x,y)/\sigma_v(x)}$, see proof in the supplementary material. Our framework thus also allows to learn the temperature of the Gumbel-max trick instead of tuning it as a hyper-parameter.

\section{Experimental Validation}

In the following we validate the advantage of our approach (referred to as `Direct Stochastic Learning') in two popular structured prediction problems: bipartite matching and $k$-nearest neighbors. We compare to direct loss minimization \citep{NIPS2010_4069}, which can be interpreted as setting the noise variance to zero (referred to as Direct $\bar{\sigma} = 0$), as well as to \citet{Lorberbom2018DirectOT}, in which the noise variance is set to one (referred to as Direct $\bar{\sigma} = 1$). Additionally, we compare to   state-of-the-art in neural sorting \citep{grover2018stochastic,DBLP:conf/ijcai/XieE19} and bipartite matching \citep{mena2018learning}. Further architectural and training details are described in the supplementary material.

In all direct loss based experiments we set a negative $\epsilon$. When $\epsilon > 0$ the loss-pertubed label $y_w^*(\epsilon)$ chooses a configuration with a higher loss and performs a gradient descent step on $\nabla_w \mu_w(x, y^*_w(\epsilon))$, i.e., it moves the parameters $w$ to reduce the score function for the high-loss label $\mu_w(x, y^*_w(\epsilon))$. When $\epsilon < 0$  the loss-perturbed label $y_w^*(\epsilon)$ chooses a configuration with a lower loss and performs a gradient descent step on 
$-\nabla_w \mu_w(x, y^*_w(\epsilon))$, i.e., it increases the score function for the low-loss label $\mu_w(x, y^*_w(\epsilon))$. This choice is especially important in structured prediction, when there might be exponentially many structures with high loss and only few structures with low loss. This observation already appears in the original direct loss work \citep{NIPS2010_4069} (see last paragraph of Section 2).

\subsection{Bipartite Matchings}
We follow the problem setting, architecture $\mu_{u,\alpha}(x,y_\alpha)$ and loss function $\ell(y,y^*)$ of \citet{mena2018learning} for learning bipartite matching, and replace the Gumbel-Sinkhorn operation with our gradient step, see Figure \ref{PermutationLearningDiagram}.

\begin{figure*}[t]
  \centering
    \includegraphics[width=0.68\textwidth]{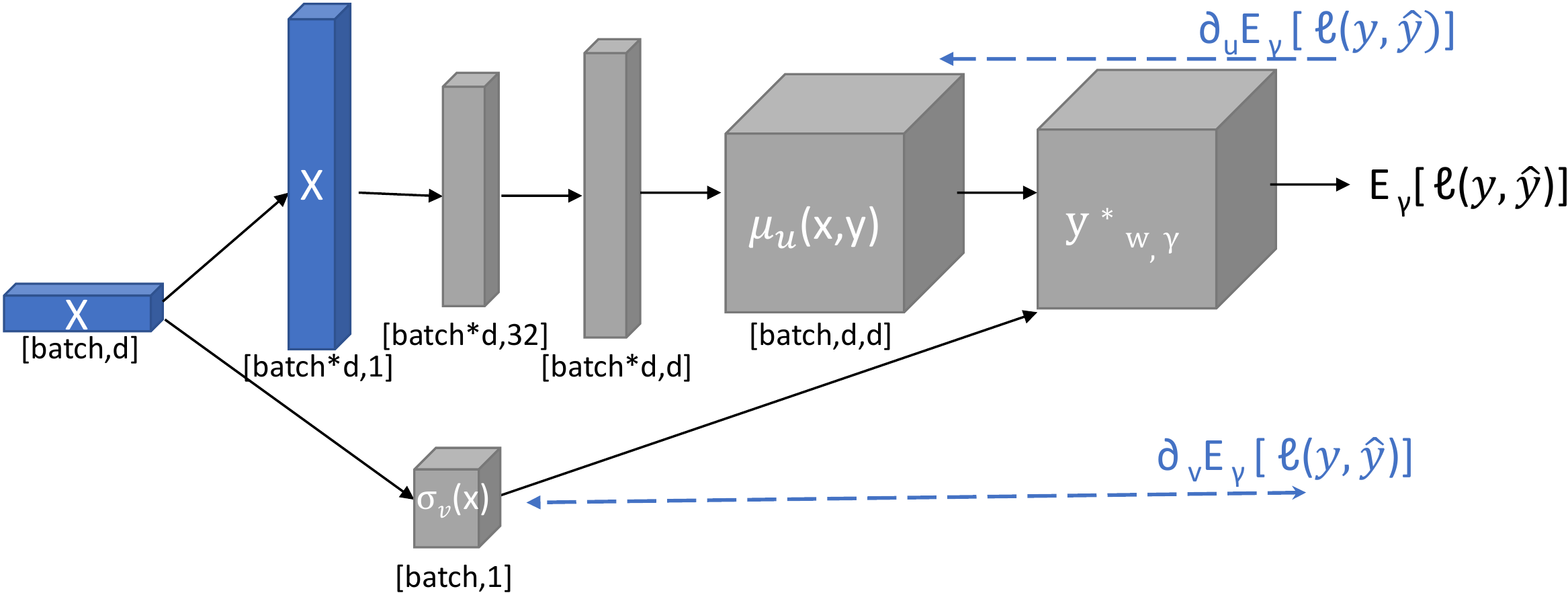}
    \caption{Architecture for learning bipartite matchings: The expectancy over Gumbel noise of the loss is derived w.r.t. the parameters $u$ of the signal and w.r.t. the parameters $v$ of the variance controller $\sigma$ directly (Equations \ref{eq:direct_mu},\ref{eq:direct_sigma} respectively).
    The network $\mu$ has a first fully connected layer that links the  sets of samples to an intermediate representation (with 32 neurons), and a second (fully connected) layer that turns those representations into batches of latent permutation matrices of dimension $d$ by $d$ each. It has the same architecture as the equivalent experiment by \citet{mena2018learning}. The network $\sigma$ has a single layer connecting input sample sequences to a single output which is then activated by a softplus activation. We chose such an activation to enforce a positive $\sigma$ value.}
\label{PermutationLearningDiagram}
\vspace{-\baselineskip}
\end{figure*}

In this experiment each training example $(x,y) \in S$ consists of an input vector $x \in R^d$ of $d$ numbers drawn independently from the uniform distribution over the $[0,1]$ interval. The structured label $y$, $y \in \{0,1\}^{d^2}$, is a bipartite matching between the elements of $x$ to the elements of the sorted vector of $x$. Formally, $y_{ij} = 1$ if $x_i = \mbox{sort}(x)_j$ and zero otherwise. Here we set $\alpha$ to be the pair of indexes $i,j = 1,\dots, d$ that corresponds to the desired bipartite matching. The network learns a real valued number for each $(i,j)$-th entry, namely, $\mu_{u,ij}(x,y_{ij})$ and our gradient update rule in Equation (\ref{eq:direct_mu}) replaces the Gumbel-Sinkhorn operator of \citet{mena2018learning}. We note that $y^*_{w,\gamma}$ can be computed efficiently  using any max-matching algorithm, which maximizes a linear function over the set of possible matching $Y$: $y^*_{w, \gamma} =$ 
\begin{equation}
\arg \max_{\hat y \in Y}  \sum_{ij=1}^d \mu_{u,ij} (x,\hat y_{ij})
+\sum_{ij=1}^d \sigma_v(x) \gamma_{ij} (\hat y_{ij})
\end{equation}
Our gradient computation also requires the loss perturbed predictor $y^*_{w, \gamma}(\epsilon)$, which takes into account the quadratic loss function:
\begin{equation*}
\ell(y,\hat y) = \sum_{i=1}^d \left(\sum_{j=1}^d x_j y_{ij} - \sum_{j=1}^d x_j \hat y_{ij} \right)^2
\end{equation*}
Note that this loss function is not smooth, as it is a function of binary elements, i.e.\ the squared differences between one hot vectors.
Seemingly, the quadratic loss function does not decompose along the score structure $\mu_{u,ij}(x,y_{ij})$, therefore it is challenging to recover the loss-perturbed prediction efficiently. Instead, we use the fact that $y_{ij}^2 = y_{ij}$ for $y_{ij} \in \{0,1\}$ and represent the loss as a linear function over the set of all matchings: $\ell(y,\hat y) = t + \sum_{j=1}^d t_{ij} \hat y_{ij}$, with $t = \sum_{ij=1}^d x_j^2 y_{ij}$ and $t_{ij} = \sum_{i=1}^d x_j^2 (1-2y_{ij})$. (Further details in the supplementary material). With this, we are able to recover the loss-perturbed predictor $y^*_{w,\gamma}(\epsilon)$ with the same computational complexity as $y^*_{w,\gamma}$, i.e., using linear solver over maximum matching:
\[
    y^*_{w,\gamma}(\epsilon) &=& \arg \max_{\hat y \in Y}  \sum_{ij=1}^d \mu_{u,ij} (x,\hat y_{ij}) \\
    &+& \sum_{ij=1}^d \sigma_v(x) \gamma_{ij} (\hat y_{ij})  + \epsilon \sum_{ij=1}^d t_{ij} \hat y_{ij} \nonumber.
\]
Note that we may omit $t$ from the optimization since it does not impact the maximal argument. 

In our experimental validation we found that negative $\epsilon$ works the best. In this case, when $y_{ij} = 0$ the corresponding embedding potential $\mu_{u,ij}(x,y_{ij})$ is perturbed by $-|\epsilon| x_j^2$, while when $y_{ij} = 1$ it is increased by $|\epsilon| x_j^2$. Doing so incrementally pushes $y^*_{w,\gamma}(\epsilon)$ towards predicting the ground truth permutation, which is aligned with our intuition of the towards-best direct loss minimization. The dynamics of our method as a function of matching dimension under a positive versus a negative $\epsilon$ is illustrated in Figure \ref{fig:dynamics}. In Figure \ref{fig:dynamics_loss} we plot the loss as a function of training epochs with varying size of matching dimension $d$. While the loss is similar for $\epsilon > 0$ and $\epsilon < 0$ when $d=10$, this changes for $d=100$. As such, the percentage of correctly sorted input entries of $y^*_w$ and $y^*(\epsilon)$ greatly differs when $d=100$ for different $\epsilon$. Importantly, when learning with $\epsilon > 0$ there are less than 40\% correct entries in $y^*_w$ (Figure \ref{fig:dynamics_e_positive}), while when learning with $\epsilon < 0$ there are at least $90\%$ correct entries in $y^*_w$ (Figure \ref{fig:dynamics_e_negative}).    

\begin{figure*}[t]
\vspace{-0.1cm}
    \centering
     \begin{subfigure}[b]{51mm}
         \centering \includegraphics[width=\textwidth]{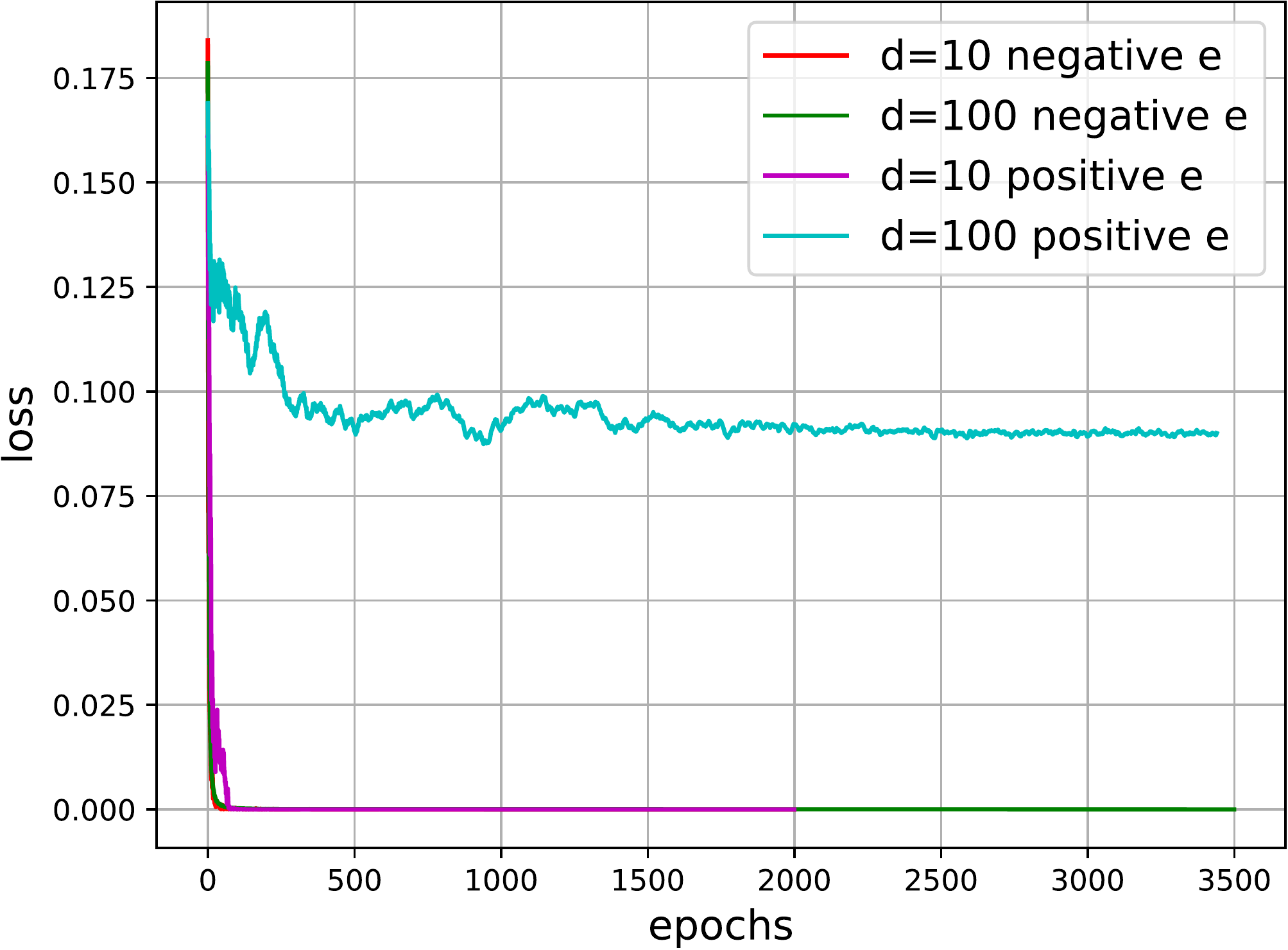}
         \caption{loss with negative and positive $\epsilon$}
         \label{fig:dynamics_loss}
     \end{subfigure}
     \hfill
     \begin{subfigure}[b]{51mm}
         \centering
         \includegraphics[width=\textwidth]{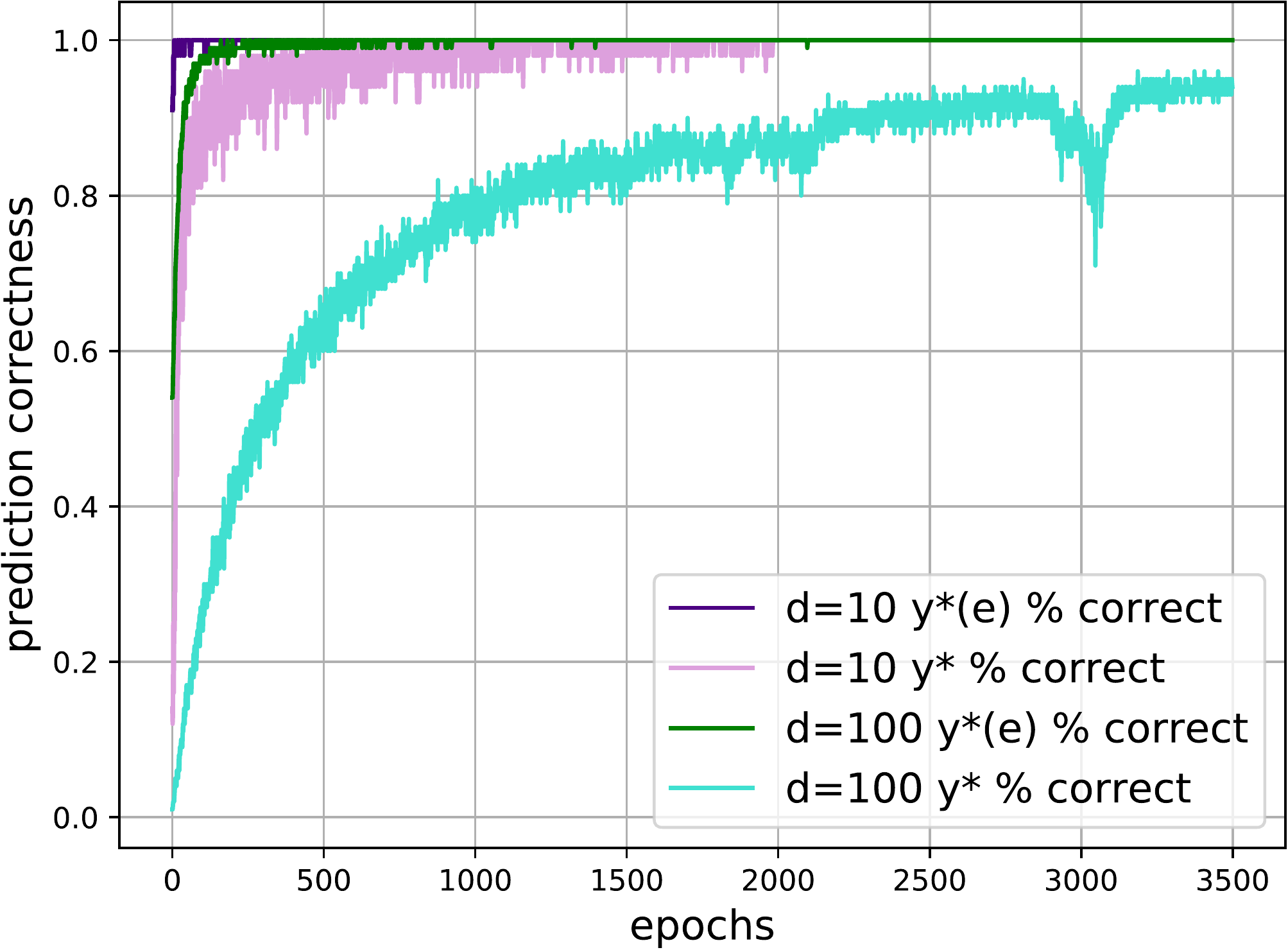}
         \caption{$y^*$ and $y^*(\epsilon)$ with negative $\epsilon$}
         \label{fig:dynamics_e_negative}
     \end{subfigure}%
    \hfill
    \begin{subfigure}[b]{51mm}
        \centering
        \includegraphics[width=\textwidth]{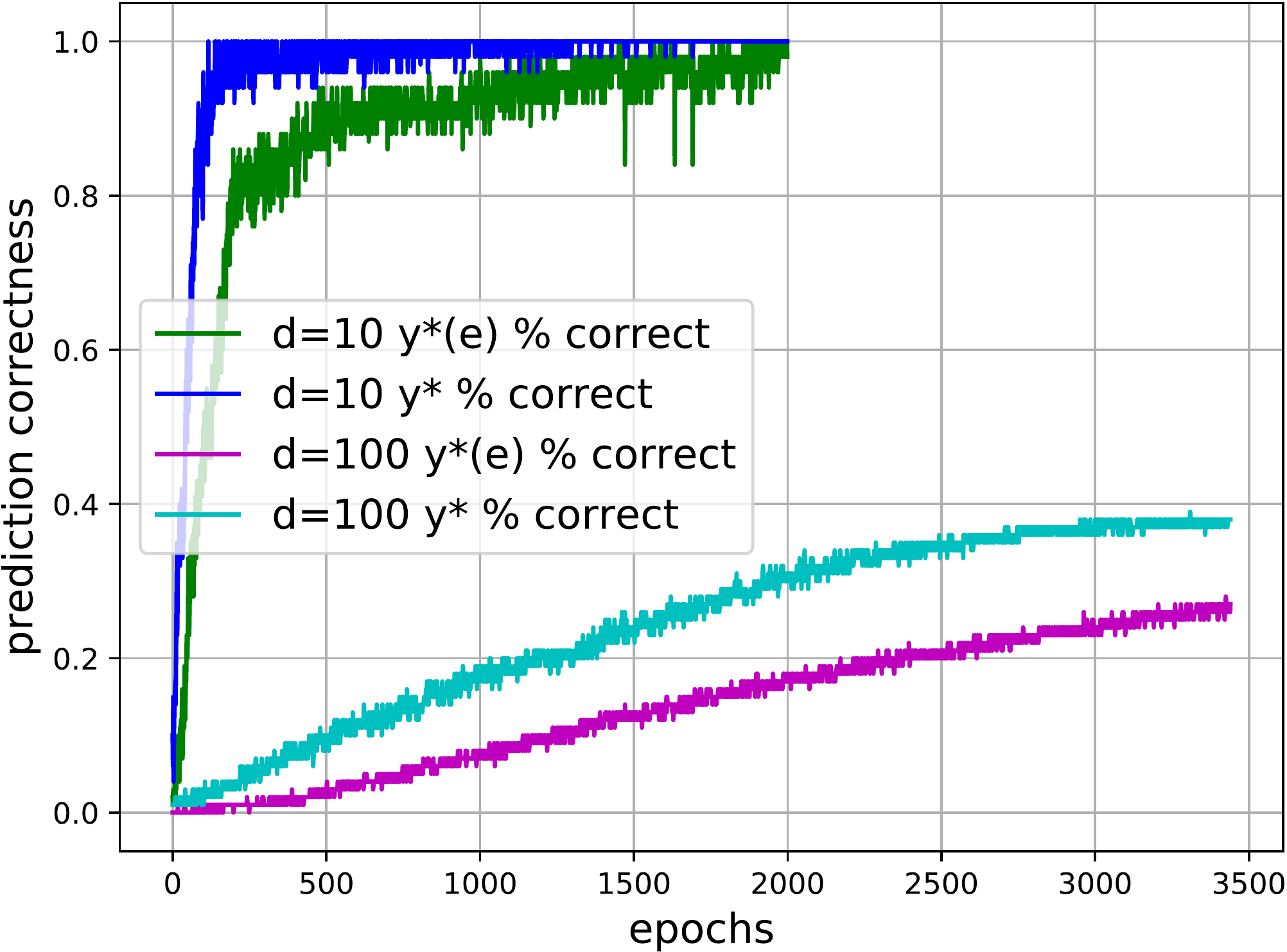}
        \caption{$y^*$ and $y^*(\epsilon)$ with positive $\epsilon$}
        \label{fig:dynamics_e_positive}
    \end{subfigure} 
\caption{The effect of the sign of $\epsilon$, as a function of dimension during training. We plot the percentage of correctly sorted input entries of the predictor ($y^*$) and the loss-augmented predictor ($y^*(\epsilon)$) as well as the loss. While the loss is similar for negative and positive $\epsilon$ when $d=10$, this changes for $d=100$ (Figure \ref{fig:dynamics_loss}). As such, the percentage of correctly sorted input entries of $y^*_w$ and $y^*(\epsilon)$ greatly differs when $d=100$ for different $\epsilon$. Importantly, when learning with $\epsilon > 0$ there are less than 40\% correct entries in $y^*_w$ (Figure \ref{fig:dynamics_e_positive}), while when learning with $\epsilon < 0$ there are at least $90\%$ correct entries in $y^*_w$ (Figure \ref{fig:dynamics_e_negative}).}
\label{fig:dynamics}
\vspace{-\baselineskip}
\end{figure*}

\citet{mena2018learning} have introduced two evaluation measures: the proportion of sequences where there was at least one error (Prop. Any Wrong), and the overall proportion of samples assigned to a wrong position (Prop. Wrong). They report the best achieved Prop. Any Wrong measure over an unspecified number of trials. To indicate robustness, we extend these measures to the following: Percentage of zero Prop. Any Wrong sequences, as well as Average and STD of Prop. Wrong, which are calculated over a number of training and testing repetitions.

We follow the Sorting Numbers experiment protocol of \citet{mena2018learning} and use the code released by the authors, to perform $20$ Sinkhorn iterations and $10$ different reconstruction for each batch sample. Also, the training set consists of $10$ random sequences of length $d$ and a test set that consists of a single sequence of the same length $d$. At test time, random noise is not added to the learned signal $\mu_{u,ij}(x,y_{ij})$. 
The results in Table \ref{tab:Permutation_all} show the measures calculated over $200$ repetitions of training and testing.

One can see that direct loss minimization performs better than Gumbel-Sinkhorn, and the gap is larger for longer sequences. One can also see that learning the variance of the noise improves the performance of the structured predictor in all three measures, when compared to direct loss minimization \citep{NIPS2010_4069}, in which the variance is set to zero, as well as to \cite{Lorberbom2018DirectOT}, in which the noise variance is set to one.

Running time comparison is given in Table \ref{tab:BiPartite_EpochTime}.

Our code may be found in \url{https://github.com/HeddaCohenIndelman/PerturbedStructuredPredictorsDirect}.

\begin{table}[ht!]
\captionof{table}{Bipartite Matching Evaluation Measures.\\
Results show Percentage of zero Prop. Any Wrong sequences of test set (i.e perfect sorting). Average and STD of Prop. Wrong in parenthesis. We show the effect of learning signal-to-noise ratio method `Direct Stochastic Learning' in comparison with `Direct $\bar{\sigma} = 0$' referring to direct loss minimization \citep{NIPS2010_4069}, which can be interpreted as setting the noise variance to zero, 'Direct $\bar{\sigma} = 1$' referring to \cite{Lorberbom2018DirectOT}, in which the noise variance is set to one, and `Gumbel-Sinkhorn' referring to \cite{mena2018learning}.
Training set setting of $10$ random sequences of length $d$ and a test set of a single sequence of length $d$. Results are calculated from 200 training and testing repetitions.}
\scriptsize	
\centering
\begin{tabular}{@{}lcccc@{}}
\toprule
d & \begin{tabular}[c]{@{}c@{}}Direct \\$\bar \sigma = 0$ \end{tabular} & 
\begin{tabular}[c]{@{}c@{}}Direct \\$\bar \sigma = 1$\end{tabular} &
\textbf{\begin{tabular}[c]{@{}c@{}}Direct Stochastic\\  Learning\end{tabular}} & \begin{tabular}[c]{@{}c@{}}Gumbel-Sinkhorn \end{tabular} \\ 
\midrule
5 & \begin{tabular}[c]{@{}c@{}}98.5\%\\ (0.6\%$\pm$4.9\%)\end{tabular}
& \begin{tabular}[c]{@{}c@{}}100\%\\ (0\%$\pm$0\%)\end{tabular}
& \textbf{\begin{tabular}[c]{@{}c@{}}100\%\\ (0\%$\pm$0\%)\end{tabular}} & \begin{tabular}[c]{@{}c@{}}100\%\\ (0\%$\pm$0\%)\end{tabular} \\ \midrule
10 & \begin{tabular}[c]{@{}c@{}}97\%\\ (0.6\%$\pm$3.4\%)\end{tabular} 
& \begin{tabular}[c]{@{}c@{}}100\%\\ (0\%$\pm$0\%)\end{tabular}
& \textbf{\begin{tabular}[c]{@{}c@{}}100\%\\ (0\%$\pm$0\%)\end{tabular}} & \begin{tabular}[c]{@{}c@{}}100\%\\ (0\%$\pm$0\%)\end{tabular} \\ \midrule
25 & \begin{tabular}[c]{@{}c@{}}89.5\%\\ (0.9\%$\pm$2.8\%)\end{tabular} & \begin{tabular}[c]{@{}c@{}}97.5\%\\ (0.3\%$\pm$1.7\%)\end{tabular} & \textbf{\begin{tabular}[c]{@{}c@{}}97.5\%\\ (0.3\%$\pm$1.6\%)\end{tabular}} & \begin{tabular}[c]{@{}c@{}}87.5\%\\ (1\%$\pm$3\%)\end{tabular} \\ \midrule
40 & \begin{tabular}[c]{@{}c@{}}84.5\%\\ (1.2\%$\pm$4.5\%)\end{tabular} & \begin{tabular}[c]{@{}c@{}}90.5\%\\ (0.6\%$\pm$2.2\%)\end{tabular}
& \textbf{\begin{tabular}[c]{@{}c@{}}91.6\%\\ (0.5\%$\pm$1.6\%)\end{tabular}} & \begin{tabular}[c]{@{}c@{}}83.5\%\\ (1\%$\pm$5\%)\end{tabular} \\ \midrule
60 & \begin{tabular}[c]{@{}c@{}}82\%\\ (0.9\%$\pm$2.6\%)\end{tabular} 
& \begin{tabular}[c]{@{}c@{}}80.0\%\\ (0.9\%$\pm$2.2\%)\end{tabular}
& \textbf{\begin{tabular}[c]{@{}c@{}}83.3\%\\ (0.7\%$\pm$1.8\%)\end{tabular}} & \begin{tabular}[c]{@{}c@{}}21\%\\ (5\%$\pm$9\%)\end{tabular} \\ \midrule
100 & \begin{tabular}[c]{@{}c@{}}74.9\%\\ (1.4\%$\pm$6.9\%)\end{tabular} 
& \begin{tabular}[c]{@{}c@{}}68.5\%\\ (1.2\%$\pm$2.4\%)\end{tabular}
& \textbf{\begin{tabular}[c]{@{}c@{}}76.8\%\\ (0.9\%$\pm$2.1\%)\end{tabular}} & \begin{tabular}[c]{@{}c@{}}0\%\\ (11.3\%$\pm$11.2\%)\end{tabular} \\ \bottomrule
\end{tabular}
\label{tab:Permutation_all}
\raggedbottom
\vspace{-\baselineskip}
\end{table}

\begin{table}[!htbp]
\caption{Comparison of average epoch time (seconds) of the bipartite matching experiment, per selected $d$}
\scriptsize
    \begin{tabular}{ccc}
    \toprule
    \multicolumn{1}{l}{d} & \textbf{Direct Stochastic Learning} & \textbf{Gumbel-Sinkhorn}  \\ \hline
    \multicolumn{1}{c}{10}  & 0.247 & 0.288  \\ \hline
    \multicolumn{1}{c}{40}  & 0.252 & 0.294 \\ \hline
    \multicolumn{1}{c}{100} & 0.304 & 0.306 \\  \bottomrule
    \end{tabular}
    \label{tab:BiPartite_EpochTime}
\vspace{-\baselineskip}
\end{table}

\subsection{k-Nearest Neighbors For Image Classification}

\begin{figure*}[!ht]
  \centering
    \includegraphics[width=0.5\textwidth]{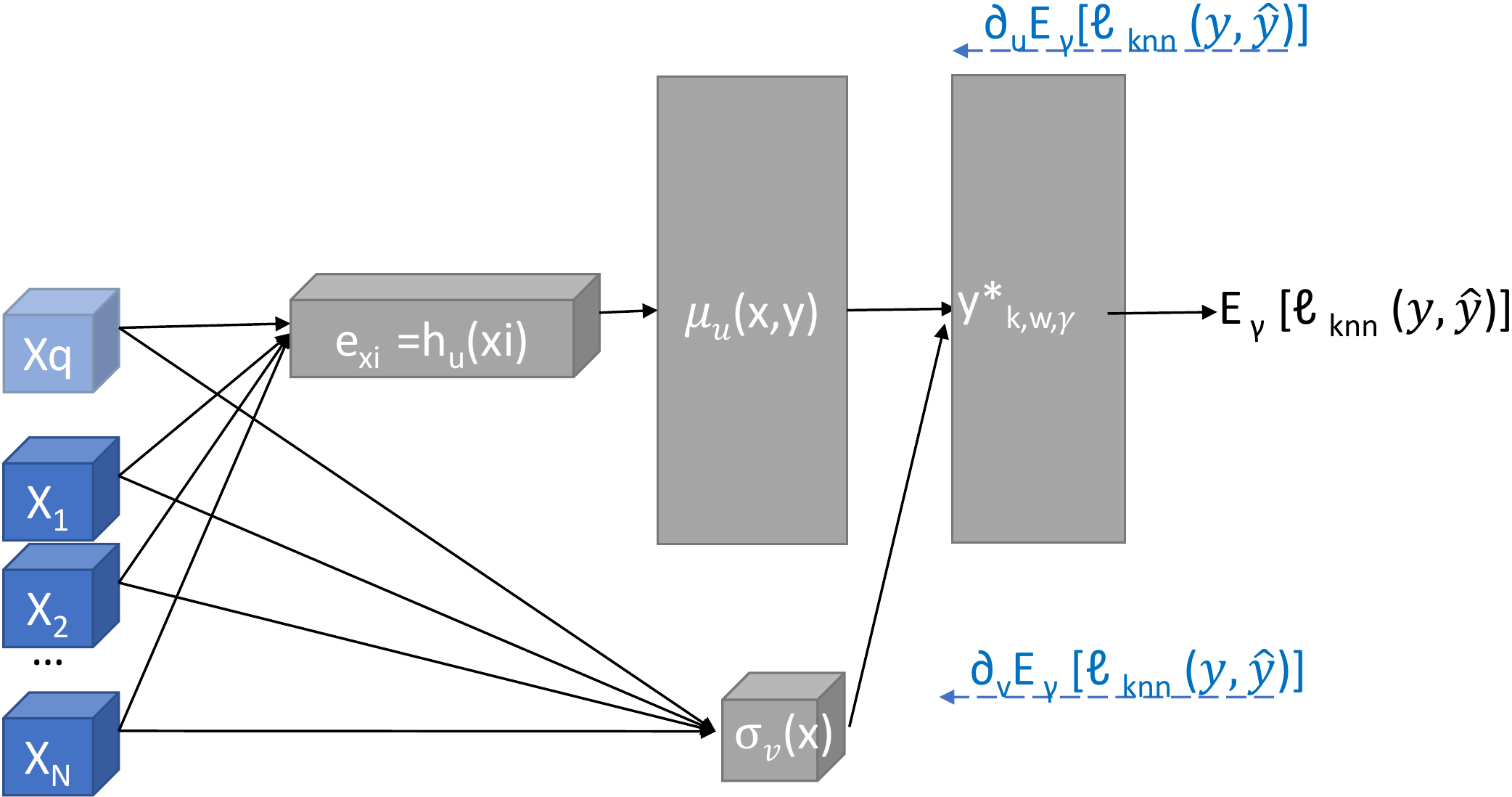}
    \caption{k-nn schematic architecture. The expectancy over Gumbel noise of the loss is derived w.r.t. the parameters $u$ of the signal and w.r.t. the parameters $v$ of the variance controller $\sigma$ directly (Equations \ref{eq:direct_mu},\ref{eq:direct_sigma} respectively). We deployed the same distance embedding networks as the ones deployed by \citet{grover2018stochastic} (Details in the supplementary material). Our prediction $y^*_{u,\gamma}$ yields the top-k images having the minimum Euclidean distance in embedding space (equivalently, the maximum negative Euclidean distance). The network $\sigma$ output is activated by a softplus activation in order to enforce a positive $\sigma$ value.}
    \label{knnDiagram}
    \vspace{-\baselineskip}
\end{figure*}

We follow the problem setting and architecture $\mu_{u,\alpha}(x,y_\alpha)$ and loss function $\ell(y,y^*)$ of  \citet{grover2018stochastic} for learning the $k$-nearest neighbors (kNN) classier. We replace the unimodal row stochastic matrix operation with our gradient step to directly minimize the distance to the closest $k$ candidates, see Figure \ref{knnDiagram}. In this experiment each training example $(x,y) \in S$ consists of an input vector $x = (x_1,...,x_n,x_q)$ of $n$ candidate images $x_1,...,x_n$ and a single query image $x_q$; its corresponding structured label $y = (y_1,...,y_n)$ . The structured label $y \in \{0,1\}^n$ points to the $k$ candidate images with minimum Euclidean distance to the query image, i.e., $\sum_{i=1}^n y_i = k$. Here we set $\alpha$ to be the index $i = 1,\dots, n$ that correspond to the top $k$ candidate images. $\mu_{u,i}(x,y_i)$ is the negative distance between the embedding $h_u(\cdot)$ of the $i$-th candidate image and that of the query image: $\mu_{u,i}(x,y_i) = -\|h_u(x_i) - h_u(x_q)\|$. 
Our prediction $y^*_{u,\gamma}$ yields the top-k images having the minimum Euclidean distance in embedding space (equivalently, the maximum negative Euclidean distance) $y^*_{u,k,\gamma} =  \arg \max_{\hat y \in Y} \{\mu_u(x,\hat y) +\gamma(\hat y)  \}$. Here, we set $Y$ to be the set of all structures $y \in \{0,1\}^n$ satisfying $\sum_{i=1}^n y_i = k$. The loss function is a linear function of its labels: $\ell(y,\hat y) = - \sum_{i=1}^n \|x_i - x_q\| y_i \hat y_i$. 
Our gradient update rule in Equation (\ref{eq:direct_mu}) replaces the unimodal row stochastic construction operator of \citet{grover2018stochastic}. We note that $y^*_{w,\gamma}$ and $y^*_{w,\gamma}(\epsilon)$ can be computed efficiently by extracting the top $k$ elements over $n$ elements. 

\begin{table}[!ht]
\captionof{table}{Test-Set Classification Average Accuracy, Per $k$.\\
We show the effect of our `Direct Stochastic Learning' method for learning signal-to-noise ratio in comparison with: `Direct $\bar{\sigma} = 0$' referring to direct loss minimization without random noise \citep{NIPS2010_4069}, `Direct $\bar{\sigma} = 1$' referring to \citet{Lorberbom2018DirectOT}, in which the noise variance is set to one, `NeuralSort' referring to \citet{grover2018stochastic}, and `RelaxSubSample' referring to \citet{DBLP:conf/ijcai/XieE19} who quote results for $k=5$ only.}
\scriptsize	
\begin{subtable}[t]{\columnwidth}
\begin{tabular}{lcccc}
\toprule
&\multicolumn{4}{c}{MNIST}  \\ \hline
& k=1 & k=3 & k=5 & k=9 \\ \hline
Direct $\bar \sigma=0 $ & 99.1\% & 99.2\% & 99.3\% & 99.2\% \\ \hline
Direct $\bar \sigma=1 $ & 16\% & 53.84\% & 14.25\% & 41.53\% \\ \hline
\textbf{\begin{tabular}[c]{@{}l@{}}Direct Stochastic Learning\end{tabular}} & \textbf{99.34\%} & 99.4\% & \textbf{99.4\%} & 99.34\% \\ \hline
\begin{tabular}[c]{@{}l@{}}NeuralSort deterministic \end{tabular} & 99.2\% & \textbf{99.5\%} & 99.3\% & 99.3\%  \\ \hline
\begin{tabular}[c]{@{}l@{}}NeuralSort stochastic \end{tabular} & 99.1\% & 99.3\% & \textbf{99.4\%} & \textbf{99.4\%}\\ \hline
\begin{tabular}[c]{@{}l@{}}RelaxSubSample\end{tabular} &  &  & 99.3\% & \\ 
\bottomrule
\end{tabular}
\end{subtable}

\vspace{0.1cm}

\begin{subtable}[t]{\columnwidth}
\begin{tabular}{lcccc}
&\multicolumn{4}{c}{Fashion-MNIST} \\ \hline
& k=1 & k=3 & k=5 & k=9 \\ \hline
Direct $\bar \sigma=0 $  & 89.8\% & 93.2\% & 93.5\% & \textbf{93.7\%} \\ \hline
Direct $\bar \sigma=1 $ & 92.5\% & \textbf{93.4\%} & 93.3\% & 93.2\% \\ \hline
\textbf{\begin{tabular}[c]{@{}l@{}}Direct Stochastic  Learning\end{tabular}} & \textbf{92.6\%} & 93.3\% & \textbf{94\%} & \textbf{93.7\%} \\ \hline
\begin{tabular}[c]{@{}l@{}}NeuralSort deterministic \end{tabular} & \textbf{92.6\%} & 93.2\% & 93.5\% & 93\% \\ \hline
\begin{tabular}[c]{@{}l@{}}NeuralSort stochastic \end{tabular} & 92.2\% & 93.1\% & 93.3\% & 93.4\% \\ \hline
\begin{tabular}[c]{@{}l@{}}RelaxSubSample\end{tabular} &  &  & 93.6\% & \\ 
\bottomrule
\end{tabular}
\end{subtable}

\vspace{0.1cm}

\begin{subtable}[t]{\columnwidth}
\begin{tabular}{lcccc}
&\multicolumn{4}{c}{CIFAR-10} \\ \hline
& k=1 & k=3 & k=5 & k=9 \\ \hline
Direct $\bar \sigma=0 $  & 24.9\% & 27\% & 39.6\% & 39.9\% \\ \hline
Direct $\bar \sigma=1 $ & 23.1\% & 89.95\% & 90.85\% & 91.6\% \\ \hline
\textbf{\begin{tabular}[c]{@{}l@{}}Direct Stochastic Learning\end{tabular}} & 29.6\% & \textbf{90.7\%} & \textbf{91.25\%} & \textbf{91.7\%} \\ \hline
\begin{tabular}[c]{@{}l@{}}NeuralSort deterministic \end{tabular} & \textbf{88.7\%} & 90.0\% & 90.2\% & 90.7\%  \\ \hline
\begin{tabular}[c]{@{}l@{}}NeuralSort stochastic \end{tabular} & 85.1\% & 87.8\% & 88.0\% & 89.5\%\\ \hline
\begin{tabular}[c]{@{}l@{}}RelaxSubSample \end{tabular} &  &  & 90.1\% & \\ 
\bottomrule
\end{tabular}
\end{subtable}
\label{tab:knn_vertical}
\end{table}

We report the classification accuracies on the standard test sets in Table \ref{tab:knn_vertical}. For MNIST and Fashion-MNIST, our method matched or outperformed `NeuralSort' \citep{grover2018stochastic} and `RelaxSubSample' \citep{DBLP:conf/ijcai/XieE19}, in all except $k=3, 9$ in MNIST. For CIFAR-10, our method outperformed `NeuralSort' and `RelaxSubSample', in all except k=1, for which disappointing results are attained by all direct loss based methods.
We note that `Direct $\bar{\sigma} = 0$' seems to suffer from very low average accuracy on CIFAR-10 dataset. Additionally,  `Direct $\bar{\sigma} = 1$' suffer from very low average accuracy on MNIST dataset. It is evident that our method stabilizes the performance on all datasets.

Running time comparison for $k=3$ is given in Table \ref{tab:knn_avgbatch}. Performance is robust to $k$ in both methods. 
\begin{table}[!htbp]
\vspace{-0.2cm}
\caption{Comparison of average epoch running time (seconds) of the k-nn experiment. Results are for $k=3$.}
\scriptsize
\begin{tabular}{ccc}
    \hline
     & \textbf{Direct Stochastic Learning} & \textbf{NeuralSort Stochastic} \\ \hline
    MNIST & 28.3 & 14.4 \\ \hline
    Fashion-MNIST & 198.6 & 328.6 \\ \hline
    CIFAR-10 & 220. & 337.6 \\ \hline
\end{tabular}
\label{tab:knn_avgbatch}
\end{table}

\section{Discussion And Future Work}
In this work, we learn the mean and the variance of structured predictors, while directly minimizing their loss. Our work extends direct loss minimization as it explicitly adds random perturbation to the prediction process to better control the relation between data instance and its exponentially many possible structures. Our work also extends direct optimization through the $\arg \max$ in generative learning as it adds a variance term to better balance the learned signal with the perturbed noise. The experiments validate the benefit of our approach. 

The structured distributions that are implied from our method are different than the standard Gibbs distribution, when the localized score functions are over subsets of variables. The exact relation between these distributions and the role of the Gumbel distribution law in the structured setting is an open problem. There are also optimization-related questions that arise from our work, such as exploring the role of $\epsilon$ and its impact on the convergence of the algorithm.

\bibliographystyle{abbrvnat}
\setcitestyle{authoryear,open={(},close={)}}
\bibliography{references}

\appendix
\setcounter{section}{0}

\onecolumn

\icmltitle{Learning Randomly Perturbed Structured Predictors for Direct Loss Minimization
Supplementary Material}

\section{Connecting variance of Gumbel random variables and temperature of Gibbs models}\label{Gibbs_temperature_proof}
We focus on the Gumbel distribution with zero mean, which is described by its a double exponential cumulative distribution function 
\begin{equation} \label{eq:g_CFD}
    G(t) = P(\gamma(y) \leq t) = e^{-e^{-(t + c)}}
\end{equation}

where $c \approx 0.5772$ is the Euler-Mascheroni constant.

We will show that then one obtains the following identity:
\begin{equation}
    e^{\frac{\mu_w(x,y)}{\sigma(x)}}  = P_{\gamma \sim g}[y^* = y]
\end{equation}
, when 
\begin{equation}
    y^* = \arg \max_{\hat y} \{ \mu_w (x,\hat y) + \sigma(x) \gamma(\hat y)\}.
\end{equation}

Let us define:
$\hat{Z}(\mu,\sigma) = \sum_{y \in Y}e^{\frac{\mu(x,y)}{\sigma(x)}}$ 
and
$Z(\mu) = \sum_{y \in Y}e^{\mu(x,y)}$ 

\begin{theorem} \label{theorem_logpartitionsigma} 
 Let $\gamma = \{\gamma(y) : y \in Y \}$ be a collection of i.i.d. Gumbel random
variables with cumulative distribution function (\ref{eq:g_CFD}). 
Then, the random variable 
$\max_{y \in Y} \{\frac{\mu(x,y)} {\sigma(x)} + \gamma(y)\}$ is distributed according to the Gumbel distribution whose mean is the log-partition
function $log \hat{Z}(\mu,\sigma)$.
\end{theorem}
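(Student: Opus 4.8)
The plan is to exploit two elementary closure properties of the Gumbel family: that adding a constant to a Gumbel variable only shifts its location, and that the maximum of independent Gumbel variables is again Gumbel, with location equal to the log-sum-exp of the individual locations. First I would fix the normalization conventions: a variable with the cumulative distribution function of Equation (\ref{eq:g_CFD}) is Gumbel with location parameter $-c$, and more generally a Gumbel variable with location $m$ has cumulative distribution function $t \mapsto e^{-e^{-(t-m)}}$ and mean $m+c$; in particular the base variables $\gamma(y)$ have mean zero, consistent with the assumption.

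Next I would compute the law of each summand and then of the maximum. For fixed $y$, the variable $\frac{\mu(x,y)}{\sigma(x)} + \gamma(y)$ has cumulative distribution function $t \mapsto \P\big(\gamma(y) \le t - \tfrac{\mu(x,y)}{\sigma(x)}\big) = e^{-e^{-(t - \mu(x,y)/\sigma(x) + c)}}$, i.e.\ it is Gumbel with location $\frac{\mu(x,y)}{\sigma(x)} - c$. Using independence of $\{\gamma(y)\}_{y \in Y}$ and the finiteness of $Y$, the cumulative distribution function of $\max_{y \in Y}\{\frac{\mu(x,y)}{\sigma(x)} + \gamma(y)\}$ is the product of these, which I would simplify as
\begin{equation*}
\prod_{y \in Y} e^{-e^{-(t - \mu(x,y)/\sigma(x) + c)}} = \exp\!\Big(-e^{-(t+c)}\,\textstyle\sum_{y \in Y} e^{\mu(x,y)/\sigma(x)}\Big) = \exp\!\Big(-e^{-(t - (\log \hat Z(\mu,\sigma) - c))}\Big),
\end{equation*}
where the last step uses $\sum_{y \in Y} e^{\mu(x,y)/\sigma(x)} = \hat Z(\mu,\sigma)$. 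This is precisely the cumulative distribution function of a Gumbel variable with location $\log \hat Z(\mu,\sigma) - c$, and reading off its mean gives $(\log \hat Z(\mu,\sigma) - c) + c = \log \hat Z(\mu,\sigma)$, as claimed.

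I do not expect a genuine obstacle here: this is the standard ``Gumbel-max'' manipulation, and the algebra is unconditional because $Y$ is finite (so the product of $|Y|$ cumulative distribution functions is again a cumulative distribution function and the maximum is automatically almost surely finite). The only point needing a word of care is precisely this well-definedness; were one to state the result for a countably infinite $Y$, one would additionally need $\hat Z(\mu,\sigma) < \infty$ to ensure the maximum is a.s.\ finite. Finally, running the analogous conditioning-and-integration argument on the single coordinate $\gamma(y)$, namely $\P[y^* = y] = \int \prod_{y' \ne y} G\big(t + \tfrac{\mu(x,y)-\mu(x,y')}{\sigma(x)}\big)\, dG(t)$, yields the companion Gibbs identity $\P_{\gamma \sim g}[y^* = y] = e^{\mu_w(x,y)/\sigma(x)}/\hat Z(\mu,\sigma)$ quoted above the theorem.
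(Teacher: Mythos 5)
Your proof is correct and follows essentially the same route as the paper: compute the product of the shifted cumulative distribution functions over the finite set $Y$, collapse the double-exponential product into $e^{-e^{-(t+c-\log \hat Z(\mu,\sigma))}}$ via the max-stability of the Gumbel family, and read off the mean from the resulting location shift. Your additional bookkeeping of the location/mean conventions and the remark on finiteness of $Y$ are fine but do not change the argument.
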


\begin{proof}
We denote by $F(t) = P(\gamma(y) \leq t)$ the cumulative
distribution function of $\gamma(y)$.

The independence of $\gamma(y)$ across
$y\in Y$ implies that:
\begin{equation*}
\begin{split}
P_{\gamma}(\max_{y\in Y} \{\frac{\mu(x,y)} {\sigma(x)} + \gamma(y)\} \leq t ) 
      &= P_{\gamma}(\forall_{y\in Y}\{\frac{\mu(x,y)} {\sigma(x)} + \gamma(y) \} \leq t)\\ 
      &= P_{\gamma}(\forall_{y\in Y}\{\gamma(y)  \} \leq t - \frac{\mu(x,y)} {\sigma(x)} )\\
      &= \prod_{y\in Y}F(t - \frac{\mu(x,y)} {\sigma(x)} )\\
\end{split}   
\end{equation*}

The Gumbel, Frechet, and Weibull distributions, used in extremal statistics, are max-stable distributions: the product 
$\prod_{y \in Y}F(t - \frac{\mu(x,y)} {\sigma(x)} )$ can be described in terms of F(·) itself. Under the said setting,
the double exponential form of the Gumbel distribution yields the result:
\begin{equation*}
\begin{split}
\prod_{y \in Y}F(t - \frac{\mu(x,y)} {\sigma(x)}) &= e^{- \sum_{y\in Y}e^{- (t - \frac{\mu(x,y)} {\sigma(x)} + c )}} \\
 &= e^{- e^{- (t + c - \log \hat{Z}(\mu,\sigma) )}} \\
 &= F(t-\log \hat{Z}(\mu,\sigma))
\end{split}   
\end{equation*}  
\end{proof}

\begin{corollary} \label{main_corollary}
 Let $\gamma = \{\gamma(y) : y \in Y \}$ be a collection of i.i.d. Gumbel random
variables with cumulative distribution function (\ref{eq:g_CFD}). Then, for all $\hat{y}$:
\begin{equation*}
\frac{e^{\frac{\mu(x,y)} {\sigma(x)}}}{\hat{Z}(\mu,\sigma)} = P_{\gamma}(\hat{y} = \arg \max_{y \in Y} \{\mu(x,y) + \sigma(x) \gamma(y)\})
\end{equation*} 
\end{corollary}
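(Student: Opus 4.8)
The plan is to recognize the statement as the Gumbel-max trick applied to the rescaled scores $\mu(x,y)/\sigma(x)$, and to prove it by the standard conditioning argument, re-using the double-exponential computation already carried out in the proof of Theorem~\ref{theorem_logpartitionsigma}.

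First I would note that since $\sigma(x)>0$, dividing the perturbed objective by $\sigma(x)$ does not change its maximizer, so $\arg\max_{y}\{\mu(x,y)+\sigma(x)\gamma(y)\}=\arg\max_{y}\{\mu(x,y)/\sigma(x)+\gamma(y)\}$. Writing $\theta(y):=\mu(x,y)/\sigma(x)$, it therefore suffices to establish $P_\gamma(\hat y=\arg\max_{y\in Y}\{\theta(y)+\gamma(y)\})=e^{\theta(\hat y)}/\sum_{y\in Y}e^{\theta(y)}$, and the right-hand side is exactly $e^{\mu(x,\hat y)/\sigma(x)}/\hat Z(\mu,\sigma)$. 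I would also record that the maximizer is almost surely unique (this is immediate from the smoothness of the density, and is precisely Theorem~\ref{uniqueness} applied here), so ties may be ignored.

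Next, fixing $\hat y$, I would express the event $\{\hat y=\arg\max\}$ as $\{\gamma(y)\le \gamma(\hat y)+\theta(\hat y)-\theta(y)\ \text{for all }y\neq \hat y\}$, condition on $\gamma(\hat y)=t$ with density $f(t)=F'(t)$ where $F(t)=e^{-e^{-(t+c)}}$, and use independence of the $\gamma(y)$ to get $P_\gamma(\hat y=\arg\max)=\int_{-\infty}^{\infty} f(t)\prod_{y\neq\hat y}F\!\left(t+\theta(\hat y)-\theta(y)\right)\,dt$. The key step is to evaluate the finite product: substituting $s=t+\theta(\hat y)$ and reusing the max-stability manipulation from the proof of Theorem~\ref{theorem_logpartitionsigma}, one gets $\prod_{y\neq\hat y}F(s-\theta(y))=\exp\!\left(-e^{-(s+c)}\bigl(\hat Z(\mu,\sigma)-e^{\theta(\hat y)}\bigr)\right)$.

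Finally I would multiply by $f(t)=e^{-(t+c)}e^{-e^{-(t+c)}}$ rewritten in the variable $s$, collect exponents so that the $e^{\theta(\hat y)}$ inside the exponential cancels and factors out in front, and change variables $u=e^{-(s+c)}$ to reduce the integral to $e^{\theta(\hat y)}\int_0^\infty e^{-\hat Z(\mu,\sigma)\,u}\,du=e^{\theta(\hat y)}/\hat Z(\mu,\sigma)$; substituting back $\theta(y)=\mu(x,y)/\sigma(x)$ yields the claim. I do not expect a genuine obstacle here: the interchange of integral and product is trivial because $Y$ is finite, and the only delicate part is careful bookkeeping of the shift constant $c$ and the score offsets in the exponents before everything collapses to the elementary integral $\int_0^\infty e^{-\hat Z u}\,du$.
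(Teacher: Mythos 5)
Your proof is correct, and it coincides with the route the paper itself records as its ``alternative proof'' of this corollary: condition on $\gamma(\hat y)=t$, write the probability as $\int f(t)\prod_{y\neq\hat y}F(t+\theta(\hat y)-\theta(y))\,dt$, and use the double-exponential form of $F$ to collapse the product. In fact your version is slightly more complete than the paper's alternative proof, which stops at ``proportional to $e^{\mu(x,\hat y)/\sigma(x)}$'' and leaves the normalizing constant implicit, whereas you evaluate the integral explicitly via $u=e^{-(s+c)}$ and obtain $e^{\theta(\hat y)}/\hat Z(\mu,\sigma)$ directly. The paper's \emph{primary} proof takes a different route: it starts from Theorem~\ref{theorem_logpartitionsigma}, which gives $\mathbb{E}_\gamma\big[\max_y\{\mu(x,y)+\sigma(x)\gamma(y)\}\big]=\sigma(x)\log\hat Z(\mu,\sigma)$, and then differentiates both sides with respect to a score entry $\mu'(x,y)$, using differentiation under the integral sign and Danskin's theorem to identify the left-hand derivative with $P_\gamma(\hat y=\arg\max)$. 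That argument buys a cleaner link to the log-partition function (useful elsewhere in the paper) at the cost of justifying the interchange of derivative and expectation; your direct computation avoids that issue entirely and only needs the finiteness of $Y$ and the almost-sure uniqueness of the maximizer, which you correctly note follows from Theorem 2 of the main text.
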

\begin{proof}
For Gumbel random
variables with cumulative distribution function (\ref{eq:g_CFD}) it holds:
\begin{equation} 
\label{GumbelProperty}
G'(t)  = e^{-t}G(t) 
 = e^{-t}e^{- e^{-t}} 
 = e^{-t-e^{-t}}
 = e^{-(t+e^{-t})} 
 = g(t) 
\end{equation}
g(t) is the probability density function of the standard Gumbel distribution.

We note that:
\begin{equation*}
\begin{split}
P_{\gamma}(\max_{y \in Y} \{\mu(x,y) + \sigma(x) \gamma(y)\})     & = \frac{\sigma(x)}{\sigma(x)}P_{\gamma}(\max_{y \in Y}         \{\mu(x,y) + \sigma(x) \gamma(y)\})\\
    & = \sigma(x) P_{\gamma}(\max_{y \in Y} \{\frac{\mu(x,y)} {\sigma(x)} +  \gamma(y)\})
\end{split}
\end{equation*}

From Theorem \ref{theorem_logpartitionsigma}, we have
$\mathbb{E}_{\gamma}[\max_{y \in Y} \{\frac{\mu(x,y)} {\sigma(x)} + \gamma(y)\}] = \log \hat{Z}(\mu,\sigma)  $

Putting it together we have that:
$\mathbb{E}_{\gamma}(\max_{y \in Y} \{\mu(x,y) + \sigma(x) \gamma(y)\}) = \sigma(x) \log \hat{Z}(\mu,\sigma)$. 
We can derive w.r.t. some $\mu'(x,y)$.

We note that by differentiating the right hand side we get:
\begin{equation*}
\begin{split}
\frac{\partial (\sigma(x) \log \hat{Z}(\mu,\sigma))}{\partial\mu '(x,y)} 
&= \frac{e^{\frac{\mu '(x,y)}{\sigma(x)}}}{ \hat{Z}(\mu,\sigma)}
\end{split}
\end{equation*}

Differentiate the left hand side:
First, we can differentiate under the integral sign:
\begin{equation*}
\frac{\partial}{\partial\mu '(x,y)} \int_{\rm I\!R^{|Y|}} \max_{y \in Y} \{\mu(x,y) + \sigma(x) \gamma(y)\} d\gamma
=  \int_{\rm I\!R^{|Y|}}\frac{\partial}{\partial\mu '(x,y)} \max_{y \in Y} \{\mu(x,y) + \sigma(x) \gamma(y)\} d\gamma
\end{equation*}  
We can write a subgradient of the max-function using an
indicator function (an application of Danskin’s Theorem):
\begin{equation*}
\frac{\partial}{\partial\mu^{\prime}(x,y)}max_{y \in Y} \{\mu(x,y) + \sigma(x) \gamma(y)\} = \mathds{1}(\hat{y} = \arg \max_{y \in Y} \{\mu(x,y) + \sigma(x) \gamma(y)\})  
\end{equation*}  
The corollary then follows by applying the expectation to both
sides of the last equation.
\end{proof}

An alternative proof of the preceding corollary can also be made.
We begin by noting that:
$P_{\gamma}(\hat{y} = \arg \max_{y \in Y} \{\mu(x,y) + \sigma(x) \gamma(y)\} ) 
= P_{\gamma}(\hat{y} = \arg \max_{y \in Y} \{\frac{\mu(x,y)}{\sigma(x)} + \gamma(y)\} )$.
Then,
\begin{equation*}
\begin{split}
P_{\gamma}(\hat{y} = \arg \max_{y \in Y} \{\mu(x,y) + \sigma(x) \gamma(y)\} ) 
    &= \int_{t} G' (t - \frac{\mu(x,y)} {\sigma(x)}) \prod_{\hat{y} \neq y} G(t-\frac{\mu(x,\hat{y})} {\sigma(x)})dt\\
    \star &= \int_{t} e^{-(t- \frac{\mu(x,y)}{\sigma(x)})}G(t-\frac{\mu(x,y)}{\sigma(x)})\prod_{\hat{y} \neq y} G(t-\frac{\mu(x,\hat{y})}{\sigma(x)})dt\\
    &= e^{\frac{\mu(x,y)}{\sigma(x)}} \int_{t} e^{-t}\prod_{\hat{y}} G(t-\frac{\mu(x, \hat{y})}{\sigma(x)})dt\\
    &= e^{\frac{\mu(x,y)}{\sigma(x)}}*constant
\end{split}
\end{equation*}

Therefore the probability that $\hat{y}$ maximizes $\mu(x,y) + \sigma(x) \gamma(y)$ is proportional to
$ e^{\frac{\mu(x,y)}{\sigma(x)}}$.

$\star$ is due to the probability density function of the Gumbel distribution as shown in (\ref{GumbelProperty}).

\section{Proof of Corollary 2}

Recall that we defined the prediction $y^*_{w,\gamma}=$
\begin{equation}
     \arg \max_{\hat y} \Big\{ \sum_{\alpha \in {\cal A}} \mu_{u,\alpha}(x,\hat y_\alpha) + \sum_{i=1}^n \sigma_v(x) \gamma_i(\hat y_i) \Big\} \label{eq:y_dsl}
\end{equation}
The loss-perturbed prediction $y^*_{w,\gamma}(\epsilon)=$
\begin{equation}
     \arg \max_{\hat y} \Big\{ \sum_{\alpha \in {\cal A}} \mu_{u,\alpha}(x,\hat y_\alpha) + \sum_{i=1}^n \sigma_v(x) \gamma_i(\hat y_i) + \epsilon \ell(y, \hat y) \Big\} \label{eq:y_dsl_epsilon}
\end{equation}

$w=(u,v)$ are the learned parameters. 

Our aim is to prove the following gradient steps:
 $\frac{\partial}{\partial u} \mathbb{E}_{\gamma}[\ell(y, y^*_{w, \gamma})] =$ 
\[
\lim_{\epsilon\to 0} \frac{1}{\epsilon} \mathbb{E}_{\gamma}\Big[ \sum_{\alpha \in {\cal A}} \left( \nabla \mu_{u,\alpha}(x,y^*_\alpha(\epsilon)) - \nabla \mu_{u,\alpha}(x,y^*_\alpha) \right)  \Big]  \]
and  $\frac{\partial}{\partial v} \mathbb{E}_{\gamma}[\ell(y, y^*_{w, \gamma})] =$ 
\[ 
\lim_{\epsilon\to 0} \frac{1}{\epsilon} \mathbb{E}_{\gamma} \Big[\sum_{i=1}^n  \nabla \sigma_{v}(x) \Big(\gamma_i (y^*_i(\epsilon)) - \gamma_i(y^*_i)\Big)\Big] . 
\]

When we use the shorthand notation $y^*_\alpha = y^*_{w,\gamma, \alpha}$ and $y_i^* = y^*_{i,w,\gamma}$ and similarly $y^*_\alpha(\epsilon) = y^*_{w,\gamma, \alpha}(\epsilon)$ and $y_i^*(\epsilon) = y^*_{i,w,\gamma}(\epsilon)$ and recall that $w$ refers to $u$ and $v$.

The main challenge is to show that $G(u,v,\epsilon)$, as defined in Equation (\ref{G_pred_dsl}), is differentiable, i.e., there exists a vector $\sum_\alpha \nabla_u \mu_{u, \alpha }(x, y_{u,v,\gamma,\alpha}^*(\epsilon))$ such that for any direction $z$, its corresponding directional derivative $\lim_{h \rightarrow 0} \frac{G(u + hz, v, \epsilon) - G(u,v,\epsilon)}{h}$ equals $\E_{\gamma \sim {\cal G}} [\sum_\alpha \nabla_u \mu_{u, \alpha }(x, y_{u,v,\gamma,\alpha}^*(\epsilon))^\top z]$.

Similarly, we will show that there exists a vector $\sum_{i=1}^n \nabla_v \sigma_{v}(x)\gamma_i(y_{i,v,u,\gamma}^*(\epsilon))$ such that for any direction $z$, 
its corresponding directional derivative

$\lim_{h \rightarrow 0} \frac{G(u, v+ hz, \epsilon) - G(u,v,\epsilon)}{h}$ equals $\E_{\gamma \sim {\cal G}} [
\sum_{i=1}^n \nabla_v \sigma_{v}(x)\gamma_i(y_{i,u,v,\gamma}^*(\epsilon))^\top z]$. 

This challenge is addressed in Theorem \ref{main_theorem_DSL}, which also utilizes Lemma \ref{lemma:maxfn_dsl}. This lemma relies of the discrete nature of the label space, ensuring that the optimal label does not change in the vicinity of $y^*_{u,v, \gamma}(\epsilon)$.
The proof concludes by the Hessian of $G(u,v,\epsilon)$ symmetric entries in Corollary \ref{corollary_DSL_appendix}. 

\begin{lemma}\label{lemma:maxfn_dsl}
Assume $\{\mu_{u, \alpha}(x,y_\alpha)\}$ is a set of continuous functions of $u$ for  $\alpha \in \cal A$ and assume $\sigma_v(x)$ is a smooth function of $v$. Let $\gamma_i(y_i)$ be i.i.d. random variables with a smooth probability density function $\cal G$. Assume that the loss-perturbed maximal arguments $y^*_{u + \frac{1}{n} z,v, \gamma}(\epsilon)$ and $y^*_{u ,v +\frac{1}{n} z, \gamma}(\epsilon)$, as defined in Equation (\ref{eq:y_dsl_epsilon}), are unique for any $z$ and $n$. Then, there exists $n_0$ such that for $n \ge n_0$ there holds 
\begin{equation}
    y^*_{u + \frac{1}{n} z,v, \gamma}(\epsilon) = y_{u,v,\gamma}^*(\epsilon) \label{maxfn_dsl_u}
\end{equation}
and there exists $n_1$ such that for $n \ge n_1$ there holds 
\begin{equation}
    y^*_{u ,v +\frac{1}{n} z, \gamma}(\epsilon)=y_{u,v,\gamma}^*(\epsilon) \label{maxfn_dsl_v}.
\end{equation}
\end{lemma}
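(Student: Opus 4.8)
The plan is to replay the argument of Lemma~\ref{lemma:maxfn} in two stages, perturbing $u$ in the first stage and $v$ in the second, exploiting in both cases that the label set $Y$ is finite so that pointwise convergence of the perturbed objectives is automatically uniform. Throughout I work pathwise in $\gamma$: fix a realization of the i.i.d.\ perturbations $\gamma_i(\cdot)$ in the probability-one event on which the maximizer in Equation~(\ref{eq:y_dsl_epsilon}) is unique — this event has full measure by Theorem~\ref{uniqueness} — and allow the thresholds $n_0$ and $n_1$ to depend on that realization as well as on the direction $z$ and on $\epsilon$. On this event every $\gamma_i(\hat y_i)$ is finite, so all the sums $\sum_i \sigma_v(x)\gamma_i(\hat y_i)$ below are well defined.

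For Equation~(\ref{maxfn_dsl_u}), define for the fixed $\gamma$
\[
f_n(\hat y) &\triangleq& \sum_{\alpha \in {\cal A}} \mu_{u + \frac{1}{n} z, \alpha}(x,\hat y_\alpha) + \sum_i \sigma_v(x)\gamma_i(\hat y_i) + \epsilon \ell(y,\hat y), \\
f_\infty(\hat y) &\triangleq& \sum_{\alpha \in {\cal A}} \mu_{u, \alpha}(x,\hat y_\alpha) + \sum_i \sigma_v(x)\gamma_i(\hat y_i) + \epsilon \ell(y,\hat y),
\]
so that $y^*_{u+\frac{1}{n} z, v,\gamma}(\epsilon) = \arg\max_{\hat y} f_n(\hat y)$ and $y^*_{u,v,\gamma}(\epsilon) = \arg\max_{\hat y} f_\infty(\hat y)$. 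Since each $\mu_{\cdot,\alpha}(x,\hat y_\alpha)$ is continuous in its parameter, $f_n(\hat y) \to f_\infty(\hat y)$ for every $\hat y \in Y$; because $Y$ is finite this convergence is uniform over $Y$, hence $\max_{\hat y} f_n(\hat y) \to \max_{\hat y} f_\infty(\hat y)$. By uniqueness of $y^*_{u,v,\gamma}(\epsilon)$ there is a gap $\delta = f_\infty(y^*_{u,v,\gamma}(\epsilon)) - \max_{\hat y \ne y^*_{u,v,\gamma}(\epsilon)} f_\infty(\hat y) > 0$, and choosing $n_0$ so that $\sup_{\hat y \in Y}|f_n(\hat y) - f_\infty(\hat y)| < \delta/2$ for all $n \ge n_0$ forces $\arg\max f_n = \arg\max f_\infty$; otherwise, exactly as in Lemma~\ref{lemma:maxfn}, the two near-maximal values would differ by at least $\delta$, a contradiction. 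This is Equation~(\ref{maxfn_dsl_u}).

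For Equation~(\ref{maxfn_dsl_v}) I repeat the same argument with $f_n(\hat y) \triangleq \sum_{\alpha \in {\cal A}} \mu_{u,\alpha}(x,\hat y_\alpha) + \sum_i \sigma_{v + \frac{1}{n} z}(x)\gamma_i(\hat y_i) + \epsilon\ell(y,\hat y)$ and the same $f_\infty$: smoothness, hence continuity, of $\sigma_v(x)$ in $v$ gives $\sigma_{v+\frac{1}{n} z}(x) \to \sigma_v(x)$, so again $f_n(\hat y)\to f_\infty(\hat y)$ pointwise and therefore uniformly on the finite set $Y$, and the same gap argument yields a (possibly different) threshold $n_1$ beyond which $\arg\max f_n = y^*_{u,v,\gamma}(\epsilon)$. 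I do not expect a genuine obstacle here: finiteness of $Y$ trivializes the uniform-convergence step and Theorem~\ref{uniqueness} supplies the uniqueness hypothesis almost surely. The only points needing care are bookkeeping ones — keeping the two perturbation directions and their thresholds $n_0,n_1$ separate, recording that everything is conditional on a fixed full-measure realization of $\gamma$, and noting the finiteness of $\gamma$ — which matter because this lemma is subsequently fed into a dominated-convergence argument in Theorem~\ref{main_theorem_DSL}, where the argmax must stabilize uniformly over the indices involved.
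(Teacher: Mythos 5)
Your proof is correct and follows essentially the same route as the paper's: pathwise in $\gamma$, compare the perturbed objective $f_n$ with its limit $f_\infty$ over the finite label set, and use uniqueness of the maximizer to conclude that the argmax stabilizes for large $n$, treating the $u$- and $v$-perturbations separately with their own thresholds $n_0$ and $n_1$. Your explicit gap argument (taking $\delta$ to be the margin between the best and second-best value of $f_\infty$ and invoking uniform convergence on the finite set $Y$) is in fact a cleaner rendering of the paper's more informal ``otherwise there is a $\delta>0$'' step.
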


\begin{proof}
We will first prove Equation (\ref{maxfn_dsl_u}). Let $f_n(\hat y) = \sum_\alpha \mu_{u + \frac{1}{n} z, \alpha}(x,\hat y_\alpha) + \sum_{i=1}^n \sigma_v(x) \gamma_i(\hat y_i) + \epsilon \ell(y, \hat y)$ so that $y^*_{u + \frac{1}{n} z,v, \gamma}(\epsilon) = \arg \max_{\hat y}  f_n(\hat y)$. Also, let $f_\infty(\hat y) = \sum _\alpha \mu_{u, \alpha}(x,\hat y_\alpha) + \sum_{i=1}^n \sigma_v(x) \gamma_i(\hat y_i) + \epsilon \ell(y, \hat y)$ 
so that $y^*_{u, v, \gamma}(\epsilon) = \arg \max_{\hat y} f_\infty(\hat y)$. Since $f_n(\hat y)$ is a continuous function then $\max_{\hat y} f_n(\hat y)$ is also a continuous function and $\lim_{n \rightarrow \infty} \max_{\hat y} f_n(\hat y) = \max_{\hat y} f_\infty(\hat y)$. Since $\max_{\hat y}  f_n(\hat y) = f_n(y^*_{u + \frac{1}{n} z, v, \gamma}(\epsilon))$ is arbitrarily close to $\max_{\hat y} f_\infty(\hat y) = f_\infty(y^*_{u,v, \gamma}(\epsilon))$, and $y^*_{u,v,\gamma}(\epsilon), y^*_{u + \frac{1}{n} z, v, \gamma}(\epsilon)$ are unique then for any $n \ge n_0$ these two arguments must be the same, otherwise there is a $\delta > 0$ for which $| f_\infty(y^*_{u,v, \gamma}(\epsilon)) - f_n(y^*_{u + \frac{1}{n} z, v ,\gamma}(\epsilon)) | \ge \delta$. 

To prove Equation (\ref{maxfn_dsl_v}), one can define 

$f'_n(\hat y) = \sum_\alpha \mu_{u, \alpha}(x,\hat y_\alpha) + \sum_{i=1}^n \sigma_{v+ \frac{1}{n} z}(x) \gamma_i(\hat y_i) + \epsilon \ell(y, \hat y)$ and follow the same steps to show that for any $n \ge n_1$ it holds that $f'_\infty(y^*_{u,v, \gamma}(\epsilon))$ and  $f'_n(y^*_{u, v+ \frac{1}{n} z ,\gamma}(\epsilon))$ are arbitrarily close and since $y^*_{u ,v +\frac{1}{n} z, \gamma}(\epsilon)$ and $y_{u,v,\gamma}^*(\epsilon)$ are unique then $y^*_{u ,v +\frac{1}{n} z, \gamma}(\epsilon)=y_{u,v,\gamma}^*(\epsilon)$. 
\end{proof}

\begin{theorem}\label{main_theorem_DSL}
Assume that $E_{\gamma \sim {\cal G}} \|\nabla_u \mu_{u,\alpha}(x,y_\alpha)\| \le \infty$, and that $E_{\gamma \sim \cal G} \|\nabla_v \sigma_v(x)\| \le \infty$.
Define the prediction generating function $G(u,v,\epsilon) =$
\begin{equation}\label{G_pred_dsl}
\mathbb{E}_{\gamma \sim {\cal G}} \Big[   \max_{\hat y \in Y} \Big\{ \sum_{\alpha \in {\cal A}} \mu_{u,\alpha}(x,\hat y_\alpha) + \sum_{i=1}^n \sigma_v(x) \gamma_i(\hat y_i) + \epsilon \ell(y, \hat y)  \Big\}  \Big].  
\end{equation}
If the conditions of Lemma \ref{lemma:maxfn_dsl} hold then $G(u,v,\epsilon)$ as defined in Equation (\ref{G_pred_dsl}) is differentiable and 
\[
    \frac{\partial G(u,v,\epsilon)}{\partial u}  &=& \E_\gamma \Big[\sum_\alpha \nabla_u \mu_{u, \alpha}(x, y^*_{\alpha}(\epsilon)) \Big] \label{dGdu}\\
   \frac{\partial G(u,v,\epsilon)}{\partial v}  &=& \mathbb{E}_{\gamma} \Big[\sum_{i=1}^n  \nabla \sigma_{v}(x) \gamma_i (y^*_i(\epsilon)) \Big] \label{dGdv}
\]
\end{theorem}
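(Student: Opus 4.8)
The plan is to mimic the structure of the proof of Theorem \ref{main_theorem_direct} almost verbatim, since the setting is identical except that $w$ now splits into $(u,v)$ and the perturbed score has the extra term $\sum_i \sigma_v(x)\gamma_i(\hat y_i)$. First I would fix the direction $z$ and prove differentiability of $G(u,v,\epsilon)$ with respect to $u$: define $f_n(\hat y) = \sum_\alpha \mu_{u+\frac1n z,\alpha}(x,\hat y_\alpha) + \sum_i \sigma_v(x)\gamma_i(\hat y_i) + \epsilon\ell(y,\hat y)$ and $f_\infty(\hat y)$ with $u$ in place of $u+\frac1n z$, and set
\[
g_n(z) \triangleq \frac{\max_{\hat y\in Y} f_n(\hat y) - \max_{\hat y\in Y} f_\infty(\hat y)}{1/n}.
\]
By Lemma \ref{lemma:maxfn_dsl} (Equation (\ref{maxfn_dsl_u})), for $n\ge n_0$ the maximizer is the fixed structure $y^*_{u,v,\gamma}(\epsilon)$, so the $\sigma_v(x)\gamma_i$ terms and the $\epsilon\ell$ terms cancel in the difference, leaving $g_n(z) = \big(\sum_\alpha \mu_{u+\frac1n z,\alpha}(x,y^*_\alpha(\epsilon)) - \sum_\alpha \mu_{u,\alpha}(x,y^*_\alpha(\epsilon))\big)/(1/n)$, whose limit is $\sum_\alpha \nabla_u\mu_{u,\alpha}(x,y^*_\alpha(\epsilon))^\top z$ by smoothness. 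Dominated convergence (justified by $\E_\gamma\|\nabla_u\mu_{u,\alpha}\|<\infty$ and measurability of $g_n$, exactly as in Theorem \ref{main_theorem_direct}) then exchanges limit and expectation, giving Equation (\ref{dGdu}).

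Second, I would repeat the argument in the $v$ direction using Equation (\ref{maxfn_dsl_v}) of Lemma \ref{lemma:maxfn_dsl}: define $f'_n(\hat y) = \sum_\alpha \mu_{u,\alpha}(x,\hat y_\alpha) + \sum_i \sigma_{v+\frac1n z}(x)\gamma_i(\hat y_i) + \epsilon\ell(y,\hat y)$, so that now the $\mu_{u,\alpha}$ and $\epsilon\ell$ terms cancel in the difference of maxima (again using that the maximizer is eventually fixed), leaving $g'_n(z) = \big(\sum_i \sigma_{v+\frac1n z}(x)\gamma_i(y^*_i(\epsilon)) - \sum_i \sigma_v(x)\gamma_i(y^*_i(\epsilon))\big)/(1/n) \to \sum_i \nabla_v\sigma_v(x)^\top z\,\gamma_i(y^*_i(\epsilon))$. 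Dominated convergence here needs $\E_\gamma\|\nabla_v\sigma_v(x)\gamma_i(y_i)\|<\infty$; since $\sigma_v(x)$ does not depend on $\gamma$, this follows from $\E_\gamma\|\nabla_v\sigma_v(x)\|<\infty$ together with finiteness of $\E_\gamma|\gamma_i(y_i)|$ (finite for Gumbel and, more generally, should be assumed of $\cal G$). This yields Equation (\ref{dGdv}).

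Finally, smoothness of $G(u,v,\epsilon)$ in all three arguments follows as in the remark after Theorem \ref{main_theorem_direct}: twice differentiability of $\mu_u$ and $\sigma_v$ lifts to $G$ via the same construction, so the mixed partials exist and the Hessian is symmetric, $\partial_u\partial_\epsilon G = \partial_\epsilon\partial_u G$ and $\partial_v\partial_\epsilon G = \partial_\epsilon\partial_v G$. Differentiating Equations (\ref{dGdu}), (\ref{dGdv}) with respect to $\epsilon$ at $\epsilon = 0$ and using $\partial_\epsilon G(u,v,0) = \E_\gamma[\ell(y,y^*_{w,\gamma})]$ (proved by the same $f_n(\hat y)\triangleq \sum_\alpha\mu_{u,\alpha}(x,\hat y_\alpha)+\sum_i\sigma_v(x)\gamma_i(\hat y_i)+\frac1n\ell(y,\hat y)$ argument) gives the two claimed gradient formulas for $\partial_u\E_\gamma[\ell]$ and $\partial_v\E_\gamma[\ell]$. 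The main obstacle is not any single step but making the dominated-convergence bound for the $v$-derivative clean: one must control $\nabla_v\sigma_v(x)\,\gamma_i(y^*_i(\epsilon))$ uniformly in $n$, which requires integrability of the Gumbel perturbations; everything else is a direct transcription of the already-proven Theorem \ref{main_theorem_direct} and Lemma \ref{lemma:maxfn_dsl}.
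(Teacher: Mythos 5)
Your proposal is correct and follows essentially the same route as the paper's proof: the same difference-quotient sequences $g_n(z)$ and $g'_n(z)$, the same use of Lemma \ref{lemma:maxfn_dsl} to stabilize the maximizer so that the unchanged terms cancel, and the same appeal to dominated convergence to exchange limit and expectation in each of the two directions. Your added remark that the $v$-direction bound also requires integrability of the Gumbel perturbations $\E_\gamma|\gamma_i(y_i)|<\infty$ is a fair point the paper leaves implicit, but it does not change the argument.
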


\begin{proof}
We will first prove Equation (\ref{dGdu}). Let $f_n(\hat y) = \sum_\alpha \mu_{u + \frac{1}{n} z, \alpha}(x,\hat y_\alpha) + \sum_{i=1}^n \sigma_v(x) \gamma_i(\hat y_i) + \epsilon \ell(y, \hat y)$ 
as in Lemma \ref{lemma:maxfn_dsl}.

The proof builds a sequence of functions $\{g_n(z)\}_{n=1}^\infty$ that satisfies   
\[
\lim_{h \rightarrow 0} \frac{G(u + hz, v, \epsilon) - G(u,v,\epsilon)}{h} = \lim_{n \rightarrow \infty} \E_{\gamma \sim {\cal G}} [g_n(z)] \label{eq:grad1_dsl} \\
\E_{\gamma \sim {\cal G}} [\lim_{n \rightarrow \infty}  g_n(z)] =\E_{\gamma \sim {\cal G}} [\sum_\alpha \nabla_u \mu_{u, \alpha}(x, y_{u,v,\gamma, \alpha}^*(\epsilon))^\top z]. \label{eq:grad2_dsl}
\]
The functions $g_n(z)$ correspond to the loss perturbed prediction $y_{u,v,\gamma}^*(\epsilon)$ through the quantity $\sum_\alpha \mu_{u + \frac{1}{n} z, \alpha}(x, \hat y_\alpha) + \sum_{i=1}^n \sigma_v(x) \gamma_i(\hat y_i) + \epsilon \ell(y, \hat y)$. The key idea we are exploiting is that there exists $n_0$ such that for any $n \ge n_0$ the maximal argument $y_{u + \frac{1}{n} z,v,\gamma}^*(\epsilon)$  does not change. 

Thus, let 
\[
    g_n(z) \triangleq \frac{\max_{\hat y \in Y} f_n( \hat y) - \max_{\hat y \in Y} f_\infty(\hat y)}{1/n} \label{eq:gn_dsl}
\]
We apply the dominated convergence theorem on $g_n(z)$, so that $ \lim_{n \rightarrow \infty} \E_{\gamma \sim {\cal G}} [g_n(z)] = \E_{\gamma \sim {\cal G}} [\lim_{n \rightarrow \infty}  g_n(z)]$ in order to prove Equations (\ref{eq:grad1_dsl},\ref{eq:grad2_dsl}). 
We note that we may apply the dominated convergence theorem, since the conditions $E_{\gamma \sim {\cal G}} \|\nabla_u \mu_{u,\alpha}(x,y_\alpha)\| \le \infty$, and $E_{\gamma \sim \cal G} \|\nabla_v \sigma_v(x)\| \le \infty$ imply that the expected value of $g_n$ is finite (We recall that $f_n$ is a measurable function, and note that since $\hat y \in Y$ is an element from a discrete set $Y$, then $g_n$ is also a measurable function.). 

From Lemma \ref{lemma:maxfn_dsl}, the terms $ \ell(y, y^*)$ and $\sum_{i=1}^n \sigma_v(x) \gamma_i( y^*_i)$ are identical in both $\max_{\hat y \in Y} f_n(\hat y) $ and $\max_{\hat y \in Y} f_\infty(\hat y)$. Therefore, they cancel out when computing the difference $\max_{\hat y \in Y} f_n(\hat y)  - \max_{\hat y \in Y} f_\infty(\hat y) $. Then, for $n \ge n_0$:
\begin{equation*}
\max_{\hat y \in Y} f_n(\hat y) - \max_{\hat y \in Y} f_\infty(\hat y) = \sum_\alpha \mu_{u + \frac{1}{n} z,\alpha}(x, y_{\alpha}^*(\epsilon))
-\sum_\alpha \mu_{u,\alpha}(x, y_{\alpha}^*(\epsilon))
\end{equation*}
and Equation (\ref{eq:gn_dsl}) becomes:
\begin{equation}
g_n(u) = \frac{ \sum_\alpha \mu_{u + \frac{1}{n} z,\alpha}(x, y_{\alpha}^*(\epsilon)) - \sum_\alpha \mu_{u,\alpha}(x, y_{\alpha}^*(\epsilon))}{1/n} \label{gn_max_f_dsl}.
\end{equation}
Since $\{\mu_{u, \alpha}(x,y_\alpha)\}$ is a set of continuous functions of $u$ , then $\lim_{n \rightarrow \infty} g_n(z) $ is composed of the derivatives of $\mu_{u,\alpha}(x, y_{\alpha}^*(\epsilon))$ in direction $z$, namely, $\lim_{n \rightarrow \infty} g_n(z)= \sum_\alpha \nabla_u \mu_{u,\alpha}(x, y_{\alpha}^*(\epsilon))^\top z$. 

We now turn to prove Equation (\ref{dGdv}). 

Let $f'_n(\hat y) = \sum_\alpha \mu_{u, \alpha}(x,\hat y_\alpha) + \sum_{i=1}^n \sigma_{v+ \frac{1}{n} z}(x) \gamma_i(\hat y_i) + \epsilon \ell(y, \hat y)$ 
as in Lemma \ref{lemma:maxfn_dsl}.

The proof builds a sequence of functions $\{g'_n(z)\}_{n=1}^\infty$ that satisfies   

\[
\lim_{h \rightarrow 0} \frac{G(u, v+ hz, \epsilon) - G(u,v,\epsilon)}{h} = \lim_{n \rightarrow \infty} \E_{\gamma \sim {\cal G}} [g'_n(z)] \label{eq:grad_v_1_dsl} \\
\E_{\gamma \sim {\cal G}} [\lim_{n \rightarrow \infty}  g'_n(z)] =\E_{\gamma \sim {\cal G}} [\sum_{i=1}^n \nabla_v \sigma_{v}(x)\gamma_i(y_{i,u,v,\gamma}^*(\epsilon))^\top z]. \label{eq:grad_v_2_dsl}
\]
The functions $g'_n(z)$ correspond to the loss perturbed prediction $y_{u,v,\gamma}^*(\epsilon)$ through the quantity $\sum_\alpha \mu_{u, \alpha}(x, \hat y_\alpha) + \sum_{i=1}^n \sigma_{v + \frac{1}{n} z}(x) \gamma_i(\hat y_i) + \epsilon \ell(y, \hat y)$. 

As before, we are exploiting the fact that there exists $n_1$ such that for any $n \ge n_1$ the maximal argument $y_{u ,v+ \frac{1}{n} z,\gamma}^*(\epsilon)$  does not change. 

Thus, let 
\[
    g'_n(z) \triangleq \frac{\max_{\hat y \in Y} f'_n( \hat y) - \max_{\hat y \in Y} f'_\infty(\hat y)}{1/n} \label{eq:gn_v_dsl}
\]

We apply the dominated convergence theorem on $g'_n(z)$, so that $ \lim_{n \rightarrow \infty} \E_{\gamma \sim {\cal G}} [g'_n(z)] = \E_{\gamma \sim {\cal G}} [\lim_{n \rightarrow \infty}  g'_n(z)]$ in order to prove Equations (\ref{eq:grad_v_1_dsl},\ref{eq:grad_v_2_dsl}), with the same justification as before for the expected value of $g'_n(z)$ being finite. 

From Lemma \ref{lemma:maxfn_dsl}, the terms $ \ell(y, y^*)$ and $\sum_\alpha \mu_{u, \alpha}(x, y^*_\alpha)$ are identical in both $\max_{\hat y \in Y} f'_n(\hat y) $ and $\max_{\hat y \in Y} f'_\infty(\hat y)$. 

Therefore, they cancel out when computing the difference $\max_{\hat y \in Y} f'_n(\hat y)  - \max_{\hat y \in Y} f'_\infty(\hat y) $. Then, for $n \ge n_1$ 
Equation (\ref{eq:gn_v_dsl}) becomes:
\begin{equation}
g'_n(u) = \frac{\sum_{i=1}^n \sigma_{v+ \frac{1}{n} z}(x) \gamma_i(y^*_i(\epsilon))- \sum_{i=1}^n \sigma_{v}(x) \gamma_i(y^*_i(\epsilon))}{1/n} \label{gn_max_f'_dsl}.
\end{equation}

Since $\sigma_v(x)$ is a smooth function of $v$, then $\lim_{n \rightarrow \infty} g'_n(z) $ is composed of the derivatives of $\sigma_{v}(x) \gamma_i(y^*_i(\epsilon))$ in direction $z$, namely, $\lim_{n \rightarrow \infty} g'_n(z)= \sum_{i=1}^n \nabla_v \sigma_{v}(x) \gamma_i(y^*_i(\epsilon))^\top z$. 

\end{proof}

\begin{corollary}\label{corollary_DSL_appendix}
Under the conditions of Theorem \ref{main_theorem_DSL},  $G(u,v,\epsilon)$  as defined in Equation (\ref{G_pred_dsl}),  is a smooth function and $\frac{\partial}{\partial u} \mathbb{E}_{\gamma}[\ell(y, y^*_{w, \gamma})] =$ 
\[
\lim_{\epsilon\to 0} \frac{1}{\epsilon} \mathbb{E}_{\gamma}\Big[ \sum_{\alpha \in {\cal A}} \left( \nabla \mu_{u,\alpha}(x,y^*_\alpha(\epsilon)) - \nabla \mu_{u,\alpha}(x,y^*_\alpha) \right)  \Big] \label{eq:dsl_mu} \]
and  $\frac{\partial}{\partial v} \mathbb{E}_{\gamma}[\ell(y, y^*_{w, \gamma})] =$ 
\[ 
\lim_{\epsilon\to 0} \frac{1}{\epsilon} \mathbb{E}_{\gamma} \Big[\sum_{i=1}^n  \nabla \sigma_{v}(x) \Big(\gamma_i (y^*_i(\epsilon)) - \gamma_i(y^*_i)\Big)\Big] .  \label{eq:dsl_sigma}
\]
\end{corollary}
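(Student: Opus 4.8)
The plan is to mirror the strategy of Corollary~\ref{directloss_gradient_cor}: express the two desired gradients as mixed second-order partial derivatives of the prediction generating function $G(u,v,\epsilon)$ of Equation~(\ref{G_pred_dsl}), and then swap the order of differentiation using symmetry of the Hessian. Theorem~\ref{main_theorem_DSL} already supplies $\partial_u G(u,v,\epsilon) = \E_\gamma[\sum_\alpha \nabla_u \mu_{u,\alpha}(x,y^*_\alpha(\epsilon))]$ and $\partial_v G(u,v,\epsilon) = \E_\gamma[\sum_{i=1}^n \nabla \sigma_v(x)\,\gamma_i(y^*_i(\epsilon))]$, so differentiating each of these in $\epsilon$ at $\epsilon=0$ produces exactly the right-hand sides of Equations~(\ref{eq:dsl_mu}) and~(\ref{eq:dsl_sigma}). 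It therefore remains to identify $\partial_\epsilon\partial_u G(u,v,0)$ with $\nabla_u \E_\gamma[\ell(y,y^*_{w,\gamma})]$ and $\partial_\epsilon\partial_v G(u,v,0)$ with $\nabla_v \E_\gamma[\ell(y,y^*_{w,\gamma})]$, and to justify the exchanges $\partial_\epsilon\partial_u G = \partial_u\partial_\epsilon G$ and $\partial_\epsilon\partial_v G = \partial_v\partial_\epsilon G$.

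First I would establish $\partial_\epsilon G(u,v,0) = \E_\gamma[\ell(y,y^*_{w,\gamma})]$. Since $\epsilon$ is a scalar, no directional derivative is needed: set $f_n(\hat y) = \sum_\alpha \mu_{u,\alpha}(x,\hat y_\alpha) + \sum_{i=1}^n \sigma_v(x)\gamma_i(\hat y_i) + \tfrac1n \ell(y,\hat y)$ and $g_n \triangleq n\bigl(\max_{\hat y} f_n(\hat y) - \max_{\hat y} f_\infty(\hat y)\bigr)$, where $f_\infty$ drops the loss term. The uniqueness-with-probability-one statement (Theorem~\ref{uniqueness}, applied with the constant $c$ adjusted to exclude the loss) together with the continuity argument of Lemma~\ref{lemma:maxfn_dsl} shows that for $n$ large the two argmaxes coincide with $y^*_{w,\gamma}$, so the $\mu$ and $\sigma\gamma$ terms cancel and $g_n = \ell(y, y^*_{w,\gamma})$ eventually; dominated convergence (valid because $\ell$ is bounded on the discrete set $Y$ and the moment conditions of Theorem~\ref{main_theorem_DSL} hold) then yields $\partial_\epsilon G(u,v,0) = \E_\gamma[\ell(y,y^*_{w,\gamma})]$. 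Differentiating this identity in $u$ and in $v$ gives $\partial_u\partial_\epsilon G(u,v,0) = \nabla_u \E_\gamma[\ell(y,y^*_{w,\gamma})]$ and $\partial_v\partial_\epsilon G(u,v,0) = \nabla_v \E_\gamma[\ell(y,y^*_{w,\gamma})]$.

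Then I would invoke the Clairaut/Schwarz theorem: because $\mu_{u,\alpha}$ and $\sigma_v$ are smooth and the interchange of expectation with differentiation is licensed by dominated convergence (as in Theorem~\ref{main_theorem_DSL}), $G$ is twice continuously differentiable in $(u,v,\epsilon)$, so its mixed partials commute. Equating $\partial_u\partial_\epsilon G(u,v,0)$ with $\partial_\epsilon\partial_u G(u,v,0) = \lim_{\epsilon\to0}\tfrac1\epsilon\E_\gamma[\sum_\alpha \nabla_u\mu_{u,\alpha}(x,y^*_\alpha(\epsilon)) - \sum_\alpha\nabla_u\mu_{u,\alpha}(x,y^*_\alpha)]$ yields Equation~(\ref{eq:dsl_mu}), and the analogous equality for $v$, using $\partial_\epsilon\partial_v G(u,v,0) = \lim_{\epsilon\to0}\tfrac1\epsilon\E_\gamma[\sum_{i=1}^n \nabla\sigma_v(x)(\gamma_i(y^*_i(\epsilon)) - \gamma_i(y^*_i))]$, yields Equation~(\ref{eq:dsl_sigma}).

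The main obstacle I expect is the justification that $G$ is genuinely $C^2$ so that the mixed partials may be swapped: one must verify that the first-order expressions furnished by Theorem~\ref{main_theorem_DSL} are themselves differentiable (in $\epsilon$, $u$, and $v$) with continuous derivatives, which requires re-running the $g_n$-plus-dominated-convergence machinery at second order and, crucially, knowing that the argmax $y^*_{w,\gamma}(\epsilon)$ is stable (unique) not merely at a single $\epsilon$ but on a neighborhood of it — here one leans on Theorem~\ref{uniqueness} holding for each fixed $\epsilon$ with probability one, combined with the local constancy of the discrete argmax from Lemma~\ref{lemma:maxfn_dsl}. A secondary technical point is the boundary case $\epsilon=0$: since the identities are asserted at $\epsilon=0$, the derivative in $\epsilon$ should be read as a (one- or two-sided) limit, and one should check that the $g_n$ construction goes through there as well.
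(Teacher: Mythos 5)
Your proposal follows essentially the same route as the paper's proof: it obtains the first-order partials $\partial_u G$ and $\partial_v G$ from Theorem \ref{main_theorem_DSL}, derives $\partial_\epsilon G(u,v,0)=\E_\gamma[\ell(y,y^*_{w,\gamma})]$ by rerunning the $g_n$/dominated-convergence argument with $f_n$ carrying a $\tfrac{1}{n}\ell$ term, and concludes by equating the symmetric mixed partials of $G$. The additional care you take with the $C^2$ justification and the one-sided limit at $\epsilon=0$ is sound and, if anything, more explicit than the paper's treatment.
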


\begin{proof}
We will first prove Equation (\ref{eq:dsl_mu}). Since Theorem \ref{main_theorem_DSL} holds for every direction $z$:
\begin{equation*}
\frac{\partial G(u,v,\epsilon)}{\partial u} = \E_\gamma \Big[\sum_\alpha \nabla_u \mu_{u, \alpha}(x, y^*_\alpha(\epsilon)) \Big].
\end{equation*}
Adding a derivative with respect to $\epsilon$ we get: 
\begin{flalign*}
\frac{\partial}{\partial \epsilon} \frac{\partial G(u,v,0)}{\partial u} = \\
\lim_{\epsilon \rightarrow 0} &\frac{1}{\epsilon} \E_\gamma \Big[\sum_\alpha \nabla_u \mu_{u, \alpha}(x, y^*_\alpha(\epsilon)) -  \sum_\alpha \nabla_u \mu_{u, \alpha}(x, y^*_\alpha)  \Big]
\end{flalign*}

The proof follows by showing that the gradient computation is apparent in the Hessian, namely Equation (\ref{eq:dsl_mu}) is attained by the identity 
$\frac{\partial}{\partial u} \frac{\partial G(u,v,0)}{\partial \epsilon} = \frac{\partial}{\partial \epsilon} \frac{\partial G(u,v,0)}{\partial u}$.

Now we turn to show that $\frac{\partial}{\partial u} \frac{\partial G(u,v,0)}{\partial \epsilon} = \nabla_{u} \mathbb{E}_{\gamma}[\ell(y, y_{w,\gamma}^*)]$.
Since $\epsilon$ is a real valued number rather than a vector, we do not need to consider the directional derivative, which greatly simplifies the mathematical derivations. We define $f_n(\gamma, \hat y) \triangleq   \sum_{\alpha}\mu_{u,\alpha}(x, \hat y_\alpha)+\sigma_v(x) \sum_{i=1}^n \gamma_i (\hat y_i) + \frac{1}{n} \ell(y, \hat y)$ and follow the same derivation as above to show that 
$\frac{\partial G(u,v,0)}{\partial \epsilon} = \mathbb{E}_{\gamma}[\ell(y, y_{w,\gamma}^*)]$.
Therefore $\frac{\partial}{\partial u} \frac{\partial G(u,v,0)}{\partial \epsilon} = \nabla_{u} \mathbb{E}_{\gamma}[\ell(y, y_{w,\gamma}^*)]$.

We now turn to prove Equation (\ref{eq:dsl_sigma}). 

Since Theorem \ref{main_theorem_DSL} holds for every direction $z$:
\begin{equation}
    \frac{\partial G(u,v,\epsilon)}{\partial v} = \E_\gamma \Big[\sum_{i=1}^n \nabla_v \sigma_{v}(x) \gamma_i(y^*_i(\epsilon)) \Big].
\end{equation}
Adding a derivative with respect to $\epsilon$ we get: 
\begin{flalign}
    \frac{\partial}{\partial \epsilon} \frac{\partial G(u,v,0)}{\partial v} = \nonumber \\
    \lim_{\epsilon \rightarrow 0} &\frac{1}{\epsilon} \E_\gamma \Big[\sum_{i=1}^n \nabla_v \sigma_{v}(x) \gamma_i(y^*_i(\epsilon))- \sum_{i=1}^n \nabla_v \sigma_{v}(x) \gamma_i(y^*_i) \Big] \label{dedv}
\end{flalign}

Following the above steps, it holds that $\frac{\partial}{\partial v} \frac{\partial G(u,v,0)}{\partial \epsilon} = \nabla_{v} \mathbb{E}_{\gamma}[\ell(y, y_{w,\gamma}^*)]$. 
Equation (\ref{eq:dsl_sigma}) is attained by the Hessian symmetric entries $\frac{\partial}{\partial v} \frac{\partial G(u,v,0)}{\partial \epsilon} = \frac{\partial}{\partial \epsilon} \frac{\partial G(u,v,0)}{\partial v}$, when considering Equation (\ref{dedv}). 

\end{proof}

\section{Training and architecture details}\label{ArchitectureDetails}

Both experiments are run on NVIDIA Tesla K80 standard machine.

\subsection{Bipartite matching}

\textbf{Training} We set maximum of 2000 training epochs, and deploy early stopping with patience of 50 epochs.

Our signal embedding network $\mu$ is trained with ADAM optimizer with learning rate (lr) = 0.1 and default parameters.
The noise variance network $\sigma$ is trained with Stochastic Gradient Descent optimizer, with lr=1e-6.

\textbf{Hyper-parameters} 
We set epsilon to -12. To escape zero gradients when loss is positive, we attempt increasing epsilon by 10\%.

We learn from five noise perturbations for each permutation representation.

\textbf{Signal embedding architecture} 
The network $\mu$ has a first fully connected layer that links the sets of samples
to an intermediate representation (with 32 neurons), and a second (fully connected) layer that turns
those representations into batches of latent permutation matrices of dimension d by d each. 

\textbf{Noise variance architecture}
The network $\sigma$ has a single layer connecting input sample sequences to a single output which is then activated by a softplus activation.
We have chosen such an activation to enforce a positive sigma value.

\subsection{\label{knn_details}k-nn for image classification}
\textbf{Datasets}
We consider three benchmark datasetes: MNIST dataset of handwritten digits, Fashion-MNIST dataset of fashion apparel, and CIFAR-10 dataset of natural images (no data augmentation) with the canonical splits for training and testing.

\textbf{Training}
We train for 220 epochs.

Our signal embedding network $\mu$ is trained with ADAM optimizer, with learning rate set to 0.001 in all experiments.

The noise variance network $\sigma$ is trained with Stochastic Gradient Descent optimizer. We perform a grid search over a small number of learning rates of the noise variance network $\sigma$. 
For MNIST and Fashion-MNIST datasets $lr \in \{1e-05, 1e-06\}$, and for CIFAR-10 dataset $lr \in \{1e-06, 1e-07\}$. 

\textbf{Hyper-parameters} 
We set the number of candidate image to 800 for MNIST and Fashoin-MNIST and to 600 for CIFAR-10. Generally, our method is benefited from an increased number of candidate images due to the sparsity of the gradients resulting from the max predictors nature.
The number of query images in a batch is 100 in all experiments.

We grid search over a small number of $\epsilon$ values. For MNIST dataset $\epsilon \in \{-0.05, -0.1, -0.2\}$, for Fashion-MNIST dataset $\epsilon \in \{-0.1, -0.2\}$, for CIFAR-10 dataset $\epsilon = -0.2$. To escape zero gradients when loss is positive, we attempt increasing epsilon by 10\% up to a threshold of $-0.9999$. 

In our 'Direct Stochastic Learning' settings, we attempt a single perturbation as well as five perturbations, though in almost all cases, a single perturbation is better for k > 1, while five perturbations are better for k=1.

\textbf{Signal embedding architecture} 
For MNIST dataset the following embedding $\mu$ network is deployed:

\hspace*{10mm} Conv[Kernel: 5x5, Stride: 1, Output: 24x24x20, Activation: Relu]\\
\hspace*{10mm} Pool[Stride: 2, Output: 12x12x20]\\
\hspace*{10mm} Conv[Kernel: 5x5, Stride: 1, Output: 8x8x50, Activation: Relu]\\
\hspace*{10mm} Pool[Stride: 2, Output: 4x4x50]\\
\hspace*{10mm} FC[Units: 500, Activation: Relu]

For the Fashion-MNIST and CIFAR datasets embedding networks $\mu$, we use the ResNet18 architecture
as described in \url{https://github.com/kuangliu/pytorch-cifar}.

\textbf{Noise variance architecture}
For MNIST and Fashion-MNIST datasets, the noise learning network $\sigma$ is as follows:

\hspace*{10mm} Conv[Kernel: 5x5, Stride: 1, Output: 24x24x20, Activation: Relu]\\
\hspace*{10mm} Pool[Stride: 2, Output: 12x12x20]\\
\hspace*{10mm} Conv[Kernel: 5x5, Stride: 1, Output: 8x8x50, Activation: Relu]\\
\hspace*{10mm} Pool[Stride: 2, Output: 4x4x50]\\
\hspace*{10mm} FC[Units: 500, Activation: Relu]\\
\hspace*{10mm} FC[Units: 1, Activation: Softplus]

For CIFAR-10 dataset, the noise learning network $\sigma$ is as follows:

\hspace*{10mm} Conv[Kernel: 5x5, Stride: 1, Output: 28x28x20, Activation: Relu]\\
\hspace*{10mm} Pool[Stride: 2, Output: 14x14x20]\\
\hspace*{10mm} FC[Units: 1, Activation: Softplus]\\

We have chosen the Softplus activation to enforce a positive sigma value.

\end{document}